\newtheorem{thm}{Theorem}[section]
\newcommand{\copyablespace}{\BeginAccSupp{method=hex,unicode,ActualText=00A0}\ \EndAccSupp{}}
\definecolor{codegreen}{rgb}{0,0.6,0}
\definecolor{codegray}{rgb}{0.5,0.5,0.5}
\definecolor{codepurple}{rgb}{0.58,0,0.82}
\definecolor{backcolour}{rgb}{0.95,0.95,0.92}
\lstdefinestyle{mystyle}{
    backgroundcolor=\color{backcolour},   
    commentstyle=\color{codegreen},
    keywordstyle=\color{magenta},
    stringstyle=\color{codepurple},
    columns=fullflexible,
    basicstyle=\ttfamily\tiny,
    breakatwhitespace=false,         
    breaklines=true,                 
    captionpos=t,                    
    keepspaces=true,                 
    numbers=none,
    showspaces=true,                
    showstringspaces=true,
    showtabs=true,                  
    tabsize=2,
    literate={\ }{{\copyablespace}}1
}
\newsavebox\lstA
\def\eqref#1{equation~\ref{#1}}
\def\1{\bm{1}}
\DeclareMathAlphabet{\mathsfit}{\encodingdefault}{\sfdefault}{m}{sl}
\SetMathAlphabet{\mathsfit}{bold}{\encodingdefault}{\sfdefault}{bx}{n}
\newcites{appendix}{Appendix References}
\title{Efficient Sequence Packing without Cross-contamination:
           Accelerating Large Language Models without Impacting Performance}
\author{%
  Mario Michael Krell${}^*$ \\
  Graphcore Inc. \\
  United States of America \\
  \texttt{mariok@graphcore.ai} 
  \And
  Matej Kosec\thanks{These authors contributed equally to the paper.} \\
  Graphcore Inc.\\
  United States of America \\
  \texttt{matejk@graphcore.ai} 
  \AND
  Sergio P. Perez \\
  Graphcore Inc. \\
  United Kingdom \\
  \texttt{sergiop@graphcore.ai}
  \And
  Andrew Fitzgibbon\\
  Graphcore Inc. \\
  United Kingdom \\
  \texttt{awf@graphcore.ai} \\
}
\begin{document}

\maketitle

\begin{abstract}
Effective training of today's large language models (LLMs) depends on large batches and long sequences for throughput and accuracy.
To handle variable-length sequences on hardware accelerators, it is common practice to introduce padding tokens, so that all sequences in a batch have the same length.
We show in this paper that the variation in sequence lengths in common NLP datasets is such that up to 50\% of all tokens can be padding.
In less common, but not extreme, cases (e.g. GLUE-cola with sequence length 128), the ratio is up to 89\%.
Existing methods to address the resulting inefficiency are complicated by the need to avoid `cross-contamination' in self-attention, by a reduction in accuracy when sequence ordering information is lost, or by customized kernel implementations only valid for specific accelerators.
This paper introduces a new formalization of sequence packing in the context of the well-studied bin packing problem, and presents new algorithms based on this formulation which, for example, confer a 2x speedup for phase 2 pre-training in BERT.
We show how existing models can be adapted to ensure mathematical equivalence between the original and packed models, meaning that packed models can be trained with existing pre-training and fine-tuning practices.
\end{abstract}



\section{Introduction}
Many language datasets, including the de-facto pre-training dataset for BERT---Wikipedia, have a skewed distribution of sequence lengths
(see Figure~\ref{f:datasets}).
However, typical machine learning accelerators, and their corresponding libraries, exhibit poor performance when processing variable-length workloads.
A simple mitigation is to set a maximum sequence length, and to pad shorter sequences with padding tokens.
This naive batching is widely used and provided in the vanilla BERT implementation 
as well as the Hugging Face framework~\cite{wolf-etal-2020-transformers}.
Its effect is enhanced by the offline dataset generation process which, 
in BERT, attempts to ``pack'' together sentences 
so as to fill the sequence length as completely as possible~\cite{DevlinGitHubpretraining}.
We improve this process at a whole-dataset level.

We show that, even after this pre-processing, padding tokens represent $50\%$ 
of all tokens of the Wikipedia pre-training dataset at sequence length $512$. 
Thus, by avoiding processing the padding tokens one can get a 2x speed-up for phase 2.
Overall, the lengths range between $5$ tokens up to $512$.
Samples of length $512$ represent only $23.5\%$ of the dataset, 

Beyond the simple batching, other solutions have been addressed in the literature, 
and in open-source software implementations.  
When processing sequences, most libraries and algorithms mention packing as reference to
concatenating sentences 
from the same document (BERT)
or 
from different documents 
(BERT, T5~\cite{Raffel2019}, GPT-3~\cite{Brown2020}, and RoBERTa~\cite{Liu2019}) as they arrive
(GREEDY)
from the source dataset to generate the training dataset.
None of the respective papers addresses the packing efficiency, i.e., remaining fraction of padding.
To ``separate'' sequences from different documents, a separator token is introduced.
However, this is not sufficient and can have a significant impact on performance.
This is discussed only in the RoBERTa paper which shows that
downstream F1 scores get consistently reduced on average by $0.35\%$.
Alternative common approaches to overcome the large amount of padding in many datasets
are \textbf{``un-padding''} as in Effective Transformer~\cite{effectivetransformer} and
sorted batching (SORT) as in Faster Transformer~\cite{fastertransformer}, 
lingvo~\cite{shen2019lingvo} fairseq~\cite{ott2019fairseq}, and RoBERTa.
However, for running efficiently on arbitrary accelerators, 
these approaches require substantial hardware-specific low-level code optimizations
only available on GPUs.
Further details are in Sections~\ref{as:sota}~\cite{appendix} and~\ref{a:scaling}.

Beyond language models, 
packing has been also present in other areas of machine learning,
however with little to no exploration in the literature
and mostly hidden in some libraries without any further discussion.
For example, 
\href{https://pytorch-geometric.readthedocs.io/en/latest/notes/batching.html}{PyG (PyTorch Geometric)}
combines multiple small graphs in a batch to account for the large variation in size
and to optimize the hardware usage when training a Graph Neural Network (GNN).
Another example is the 
\href{https://pytorch.org/docs/stable/_modules/torch/nn/utils/rnn.html#PackedSequence}{RNN implementation in PyTorch}
which introduces a ``PackedSequence'' 
object and states that ``All RNN modules accept packed sequences as inputs''
but does not address how sequences are packed efficiently and how the processing of packed sequences
is implemented in an efficient manner while avoiding interaction between sequences.
Even though we focus on BERT~\cite{Devlin2019} and other transformers in this paper,
the general principles can be transferred to many more machine learning
algorithms with differently sized data samples.

In this paper, we formally frame the packing problem in transformer based models,
and provide some solutions, 
showing that sequences can be packed efficiently, separator tokens are not required,
and cross-contamination can be avoided with little overhead.
In summary, the contributions of the paper are as follows.
In Section~\ref{s:data}, 
    we produce histograms of a variety of datasets
    showing the high percentage of padding tokens.
    In Section~\ref{s:packing}, 
    we present two new deterministic and efficient packing algorithms based on established solvers
    which efficiently pack datasets with millions of sequences in a matter of seconds (or less).
    In Section~\ref{s:pbert} and Section~\ref{s:hparams}, 
    we describe `cross-contamination'
    ---the cause of the accuracy reduction which separator tokens do not mitigate---
    and show 
    how the BERT model can be adjusted
    to show the same convergence behavior on packed and unpacked sequences.
        We empirically show that the proposed packing algorithms 
        produce a nearly-optimal packing scheme 
        for Wikipedia pre-training dataset (Section~\ref{s:exppacking})
        and more in the Appendix.
In Section~\ref{s:explc}, we demonstrate that the convergence of the BERT large model 
        on the packed dataset is 
        equivalent to that on
        the un-packed dataset 
        with 2x throughput increase on the Wikipedia sequence length $512$ pre-training dataset.
Further experiments underline the necessity and efficiency of our changes.

\section{Sequence length distributions}
\label{s:data}

\begin{figure*}[htb!]
    \centering
    \includegraphics[width=0.65\linewidth]{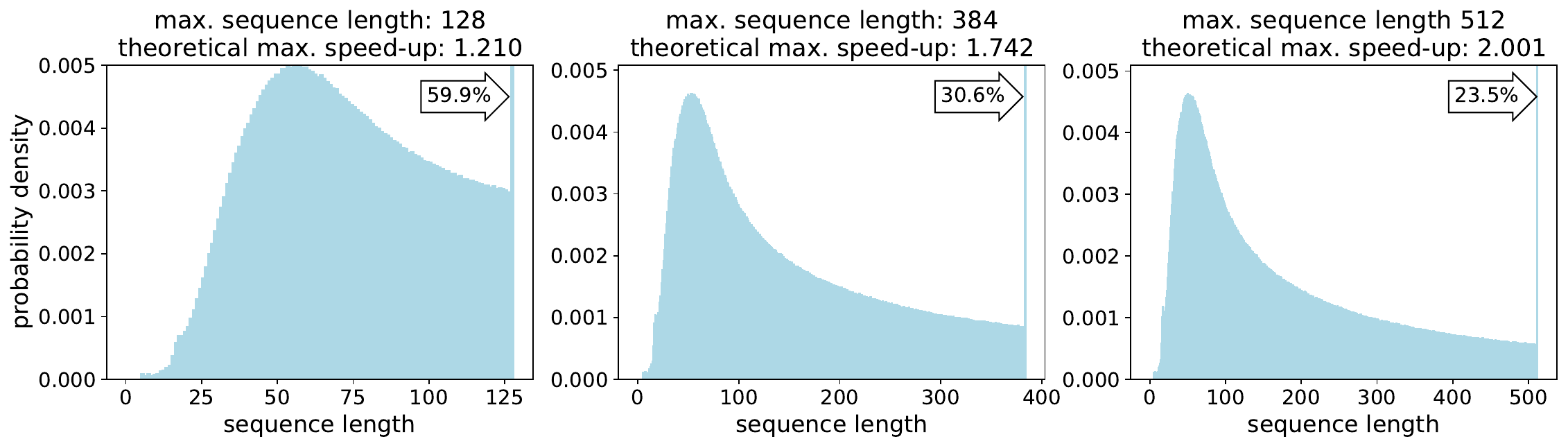}
    \includegraphics[width=0.26\linewidth]{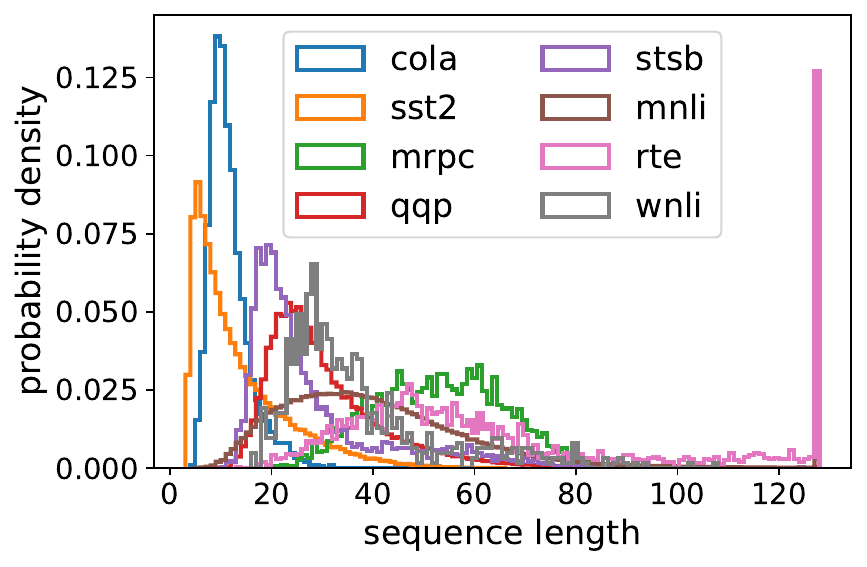}
    \includegraphics[width=0.225\linewidth]{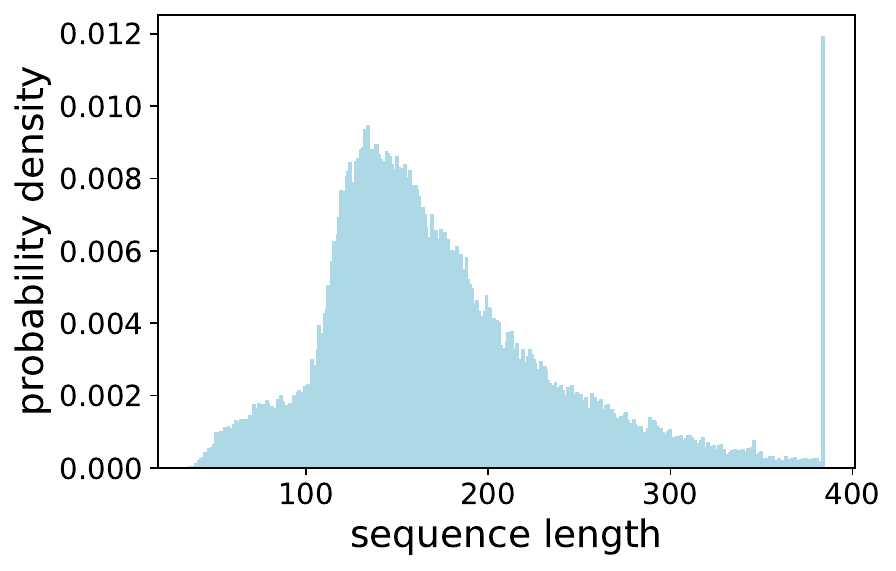}
    \includegraphics[width=0.225\linewidth]{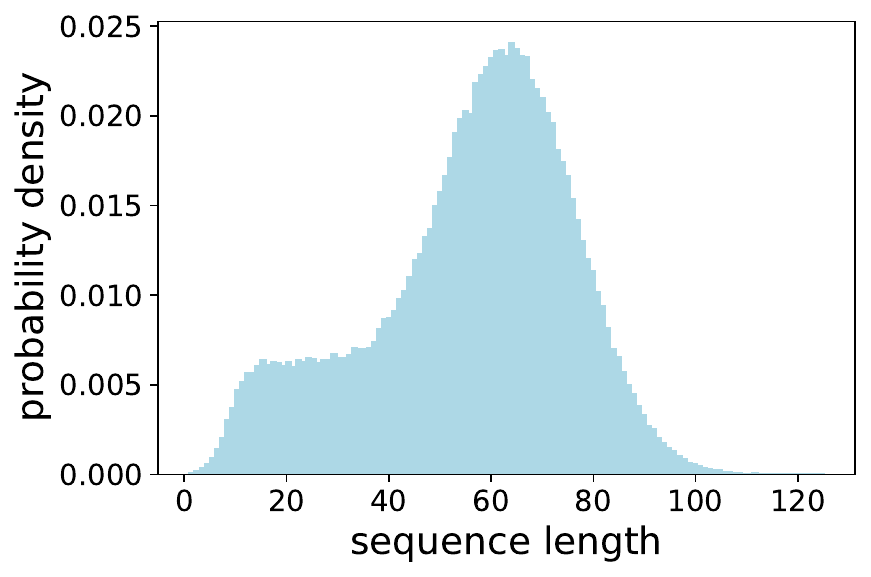}
    \includegraphics[width=0.225\linewidth]{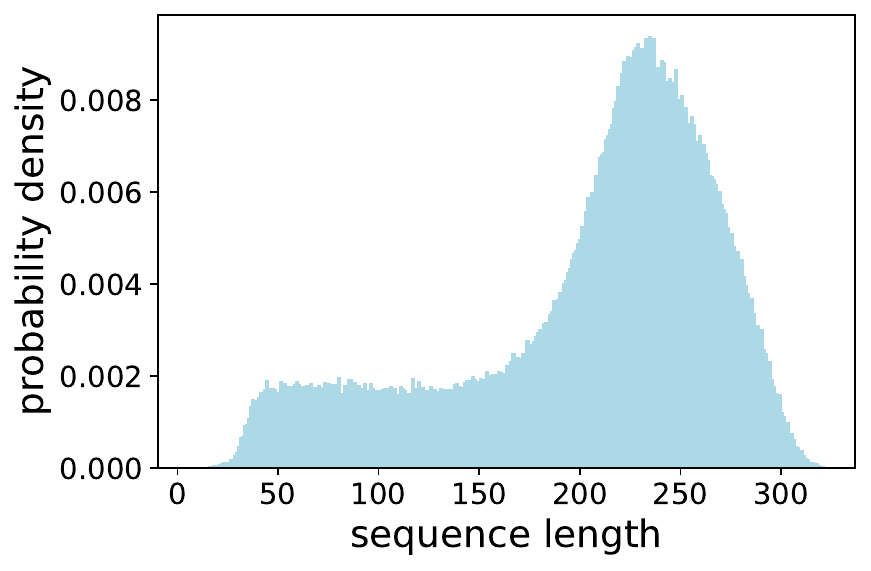}
    \includegraphics[width=0.225\linewidth]{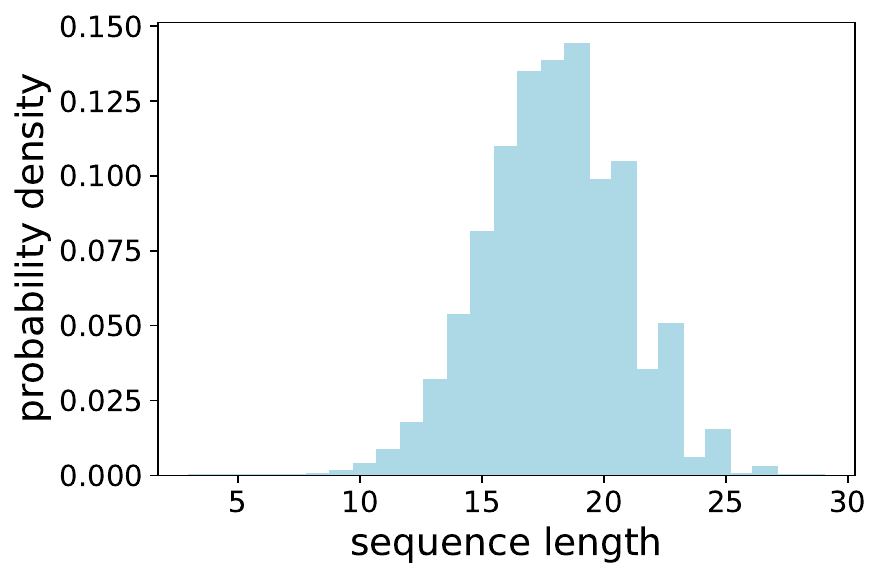}
    \caption{Sequence length distributions for different datasets. 
    The three graphics at the top left show Wikipedia BERT pre-training dataset sequence length histograms (token count excluding padding) for different maximum sequence lengths based on the Wikipedia article dump from October 1st 2020.
    The theoretical speed-up relates to not using any padding tokens and not having any overhead from processing the different lengths.
    Top right: GLUE datasets. 
    Bottom from left to right: 
    SQuAD 1.1,
    LibriSpeech text labels,
    LibriSpeech audio token sequence,
    and QM9 molecules of a graph in a sequence.
    }
    \label{f:datasets}
\end{figure*}

BERT is pre-trained using masked-language modelling and next-sentence prediction on a large corpus of Wikipedia articles.
Each sequence is composed of one $<$CLS$>$ token followed by the first ``segment'' of sentences, followed by a $<$SEP$>$ token, and then finally the second ``segment'' of sentences. Because these ``segments'' are created in sentence-level increments there is no token-level control of sequence length.
Furthermore $10\%$ (default value,~\cite{DevlinGitHub}) of sequences are intentionally cut short.
This leads to significant levels of padding, especially for longer maximum sequence lengths 
(see Figure~\ref{f:datasets} and Section~\ref{as:length}\cite{appendix}). 
At sequence length $128$ (commonly used in phase 1 of pre-training) the theoretical speed-up is around $1.2$, 
at sequence length $384$ this increases to $1.7$, 
and finally at sequence length $512$ (commonly used for phase 2 of pre-training)
it is $2.0$. 
Despite the widespread use of the Wikipedia dataset for pre-training BERT such histograms have, 
to the best of our knowledge, not been published previously. 
This has perhaps lead to the underestimation of the speed-up opportunity available.
To put things into perspective, the sequence length 512 dataset contains  $8.33$ billion tokens, of which $4.17$ billion are padding tokens. 

Note that the skewed sequence length distributions are neither limited to Wikipedia, as shown with
GLUE~\cite{wang2019glue, warstadt2018neural} from Section~\ref{as:glue}\cite{appendix} 
and SQuAD 1.1~\cite{squad} from Section~\ref{as:squad}\cite{appendix} 
($2.2x$ speed up), to BERT training, 
as shown with LibiSpeech text distributions~\cite{panayotov2015librispeech} 
from Section~\ref{as:librispeech}\cite{appendix},
nor to text itself, given the LibriSpeech audio data distributions, 
and the QM9 molecular data~\cite{qm9,ramakrishnan2014quantum} 
($1.6x$ speed-up, Section~\ref{as:qm9length}\cite{appendix}).
All distributions can be found in Figure~\ref{f:datasets}.
Since LibriSpeech audio data is skewed to longer sequences,
only $1.3x$ speed-up could be achieved despite the theoretical maximum of $1.6x$.
For all other cases, the algorithms presented in Section~\ref{s:packing} lead
to close to optimal packing.

\section{Methods}
\label{s:methods}

Our approach consists of three distinct components.
Firstly, we pack the $n$ data samples efficiently during pre-processing 
to make full use of the maximum sequence length, $s_m$
(Sections~\ref{s:packing}
and~\ref{as:tech}).
Secondly, we introduce a series of model changes in Section~\ref{s:pbert} that preserve the equivalence with the original BERT implementation. The changes include a self-attention mask 
to prevent the model from attending between different sequences in the same pack
(Section~\ref{s:masking}), and an adjustment of the the positional embeddings (Section~\ref{s:posenc}) to handle packs of sequences.
Other components of the model, 
such as the feed-forward layer~\cite{Vasmani2017}, operate on a per-token basis and do not require modification for pre-training. 
In Section~\ref{s:loss}, we also demonstrate how to compute a per-sequence loss and accuracy for NSP and downstream fine-tuning tasks. 
Thirdly, we provide suggestions for hyperparameter adjustment (Section~\ref{s:hparams}) 
that lead to analogous convergence behavior between the packed and un-packed BERT implementations.
Additional videos and animations are provided as supplemental material.

\subsection{Packing algorithms}
\label{s:packing}

The widely studied and well established bin packing problem 
deals with the assignment of items into bins 
of a fixed capacity such that the number of utilized bins is minimized.
It has been known for decades if not centuries.
Since an exact solution is strongly NP-complete~\cite{Korte2012}, 
numerous approximate solutions have been proposed~\cite{johnson1973near,Lee1985,Johnson1985,Yue1995}.
Since most existing approximations have a high complexity of at least $O(n\log n)$, 
we propose two new heuristic offline algorithms
that are tailored to the NLP setting applied to the whole dataset.
For a detailed introduction to packing see
Section~\ref{as:tech}.

\subsubsection{Shortest-pack-first histogram-packing (SPFHP)}
\label{s:wbfpacking}
Shortest-pack-first histogram-packing (SPFHP) works on the bins in the
sequence length histogram (with bin size 1) 
rather than the individual samples.
The histogram is traversed in sorted order from longest to shortest sequences.
Then, to pack the data during the traversal, we apply the worst-fit algorithm~\cite{johnson1973near,Yue1995} such that the histogram bin being processed
goes to the 
\textbf{``pack''}\footnote{
We avoid the ambiguous terms ``bin'' and ``sample/sequence''and use ``pack'' instead to refer to the multiple sequences concatenated during packing.
} 
that has the most space remaining (``shortest-pack-first'').
If the histogram bin does not fit completely, a new pack is created.
We also limit the \textbf{packing depth}, 
in other words the maximum number of sequences that are allowed in a pack.
Therefore, an existing pack is only extended if it is not already at maximum packing depth.
The detailed code for the algorithm is provided in
Listing~\ref{lst:wbfcode}.
The time and space complexity of the algorithm are $O(n+s_m^2)$ and $O(s_m^2)$
(Section~\ref{as:bigohspfhp}\cite{appendix}).

\subsubsection{Non-negative least squares histogram-packing (NNLSHP)}
\label{s:nnlspacking}
The proposed NNLSHP algorithm is based on re-stating the packing problem as a (weighted) non-negative least squares problem (NNLS)~\cite{Bro1997} of the form $wAx=wb$ where $x\geq0$.
The vector $b$ is the histogram containing the counts of all the sequence lengths in the dataset.
Next, we define the $A$ matrix (the ``packing matrix``) by first generating a list of all possible sequence length combinations (``strategies'') 
that add up exactly to the maximum sequence length.
We focus specifically on strategies that consist of at most $3$ sequences per pack (independent of $b$) and encode each strategy as a column of the sparse matrix $A$.
For example, a strategy consisting of the sequence length $128$, $128$, and $256$ in represented a column vector that has the value $2$ at the $128$th row, the value $1$ at the $256$th row,
and zero at all other rows.
The variable $x$ describes the \textit{non-negative} repetition count for each strategy. So a $24$ in the $i$th row of $x$ means that the strategy represented by the $i$th column of $A$ should repeat $24$ times.
Moreover, in the un-weighted setting, $Ax=b$ states that we would like to ``mix'' the pre-defined strategies (columns of $A$) 
such that the number of samples matches the histogram $b$, and where each strategy is used $x\geq0$ times.
We use the residual weight $w$ to control the penalization of the $Ax-b$ residual 
on different sequence lengths (different rows of $b$). 
Heuristically,  we set the weight of $0.09$ for all sequences of length $8$ or smaller
because they are considered acceptable padding sequences while 
all other sequence lengths get weight $1$.
We discuss this heuristic choice of parameters in Section~\ref{as:weighting} and~\ref{as:weight_params}\cite{appendix}.
The overall efficiency of the packing is not greatly influenced by the weighing (less than $1\%$ extra speed-up).

After solving $wAx = wb$ for $x\geq0$ using an off-the-shelf solver, we obtain a floating point solution, 
which means that the repetition counts are not necessarily integers. 
Since we cannot use a non-natural number of strategies, 
we round the solution $\hat{x}$ to the nearest integer.
The error introduced by this rounding is found to be negligible (a few hundred sequences in the worst case) 
compared to the size of the dataset (millions of sequences).
The time complexity and space complexity of the algorithm are $O(n+s_m^5)$ and $O(s_m^3)$.
Further details are provided in Section~\ref{as:nnlspacking}.

\subsection{packedBERT: model changes}
\label{s:pbert}
This section describes how any vanilla BERT implementation should be modified for packed sequence processing, 
such that the behavior of the model is the same as when processing unpacked sequences. 
Preserving the mathematical equivalence is necessary 
to ensure existing BERT pre-training and fine-tuning practices remain valid, 
as well as being required by benchmarks such as MLPerf\texttrademark~\cite{mlperf}.
The presented approaches and principles apply to a variety of other models.

\subsubsection{Adjust positional embeddings}
\label{s:posenc}
The BERT model 
uses three types of embeddings: 
token, segment, and positional embeddings. 
The latter is canonically implemented as a bias add operation, rather than a full embedding look-up. 
This is possible because the positional indices increase linearly for every sequence.
However, when using the packed data format the position index 
needs to be reset with each new packed sequence. 
For instance, when packing two sequences one of length $2$ and one of length $3$, 
the positional embedding indexes that need to be picked up are $[0, 1, 0, 1, 2]$. 
To achieve this, the bias add needs to be replaced by an embedding look-up 
to extract the correct positional embedding for each token in the pack. 
This also requires keeping an extra input which specifies the position of each token in its sequence.
This required adjustment has only a minor impact on absolute accuracy/loss 
(see Section~\ref{s:explc} and~\ref{as:ablation}).

\subsubsection{Adjust attention masking}
\label{s:masking}

\begin{figure}[ht!]
    \centering
    \begin{minipage}{0.37\linewidth}
    \centering
\begin{lstlisting}[language=Python]
# input
mask = np.array([[1, 1, 1, 2, 2]])
# 0, 1 mask
zero_one_mask = tf.equal(mask, mask.T) 
# for use with softmax:
softmax_mask = tf.where(
    zero_one_mask, 0, -1000)
\end{lstlisting}
\end{minipage}
\begin{minipage}{0.1\linewidth}
    \centering
    \resizebox{\linewidth}{!}{%
    $\left(
    \begin{tabular}{cccccc}
    1 & 1 & 1 & 0 & 0 \\
    1 & 1 & 1 & 0 & 0 \\
    1 & 1 & 1 & 0 & 0 \\
    0 & 0 & 0 & 1 & 1 \\
    0 & 0 & 0 & 1 & 1
    \end{tabular}%
    \right)$
    }
    \end{minipage}
\begin{minipage}{0.5\linewidth}
\includegraphics[width=\linewidth]{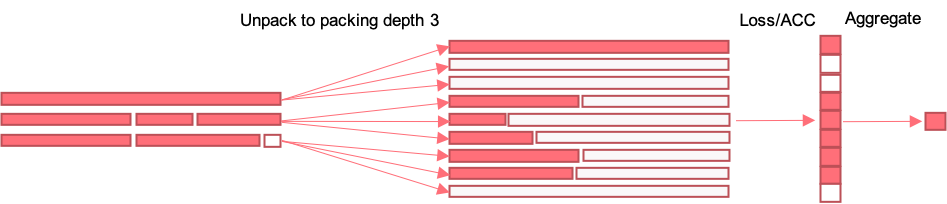}
\end{minipage}
    \caption{Attention mask code [left], respective zero-one mask [middle],
    and vectorized unpacking of the sequence loss[right]. 
    White rectangles correspond to padding.}
    \label{fig:attention_lossunpack}
\end{figure}

To maintain an implementation that is consistent with the un-packed version, 
tokens from different sequences within a pack should not be able to attend to each other.
This is typically achieved in other implementations 
by unpacking the sequences using custom attention kernels 
and then doing the attention per-sequence~\cite{effectivetransformer}. 
Instead, we propose directly masking the attention matrix with a block-diagonal mask 
before the attention softmax. 
This is straightforward to implement in modern frameworks (see Figure~\ref{fig:attention_lossunpack}).
Naturally, there is a cost to both the mask construction and applying it to the attention matrix.
However, it is required to keep the accuracy 
(see Table~\ref{tab:depth_study}, Section~\ref{s:exppacking}, Section~\ref{s:explc}).
See also the code of the deprecated \href{https://github.com/tensorflow/tensor2tensor/commit/c9144dfa5f514cab529f487b069415daee5e211e#diff-3c271923bb62bdd35f3b0f6a2c94ea320825d834bbf51334a9acbc04fbea9763R538}{tensor2tensor} library
and our own provided code.

\subsubsection{Adjust per-sequence loss and accuracy}
\label{s:loss}
Canonical implementations of BERT compute the cross-entropy loss for the masked language model on a per-token basis. 
However other NLP tasks, such as SQuAD, compute the loss and accuracy on a per-sequence basis. 
This section discusses how to handle such tasks when training with packed sequences.
Simply feeding packs of sequences to the same implementation of cross-entropy would result in a per-pack weighted loss. 
In other words, the overall loss on the micro-batch would sum-up the losses on the individual packs, 
rather than individual sequences.
As a result, the model would converge to a different optimum than when running with the un-packed implementation.
For instance, a pack of a single sequence would contribute to the loss 
with the same weight as a pack of three sequences. 

To recover the per-sequence averaging behavior of the canonical un-packed BERT implementation,
we effectively ``unpack'' the incoming logits and labels. 
Once the sequences have been unpacked, we can compute the loss on each sequence separately as usual and then add up the losses.
However, rather than looping through the sequences index, we compute on all indexes in parallel (see Figure~\ref{fig:attention_lossunpack}).
This minimizes the latency overhead of un-packing the loss calculation.
As an example, we show how per-sequence loss can be implemented for the pre-training task.
We use the ``masked lm weight''~\cite{DevlinGitHub} input tensor to represent which sequence a given masked token belongs to (0, 1, 2 and so on).
This is consistent with the canonical BERT implementation where this input takes a value of either 1 (belonging to the sequence) 
or 0 (belonging to padding).
The full methodology is detailed in 
Listing~\ref{lst:loss} and can be applied to other classification or pre-training tasks.


\subsection{Adjust hyperparameters}
\label{s:hparams}
In terms of convergence behavior, the primary consequence of packing is an increase 
in the effective batch size (with respect to number of sequences and real tokens)
with some added variation over different iterations.
If we look on the sentence level, the number of sentences in one batch increases by the packing factor.
Similarly, the number of tokens in one batch increases.
Hence, hyperparameters that are sensitive to these numbers need to be adjusted.


A direct solution is to 
reduce the computational batch size 
by the packing factor (average number of sequences per pack) 
and keep all other hyperparameters the same.
For example, if the packing factor is 2,
cutting the gradient accumulation count by half is sufficient.
The advantage of this strategy is that no fine-tuning of hyperparameters is required
and performance curves are comparable.
However, this approach might be not desirable as it might imply 
under-utilizing the memory/compute,
especially if the micro batch size needs to be reduced.


Hence to preserve batch size and optimize hardware utilization,
we additionally propose an approximate heuristic for updating the decay parameters of the LAMB optimizer~\cite{You2019} .
For a packed dataset with a packing factor $p$, we update the decay parameters as:
$
\beta_1 := \beta_1^{p},\, \beta_2 := \beta_2^{p}.
$
For $p=2$, this corresponds to the exact parameters for calculating momentum and velocity, 
when updating with the same gradient twice
(Section~\ref{as:hyperlamb}).
A common approach is to scale the learning rate with the batch size.
However, our experiments in Section~\ref{s:explc} show that this reduces
convergence speed.

Since these adjustments are only heuristics the convergence of the model will be comparable but not identical. 
In particular, it is unlikely that simply adjusting the hyperparameters will fully undo the impact of the increased
batch size. However, with these adjustments, 
researchers should be able to continue to use existing configurations.

\section{Experiments}

\subsection{Bin packing algorithm comparison}
\label{s:exppacking}

We evaluate our algorithms using the following metrics: 
\textbf{number of packs},
\textbf{number of all tokens}, \textbf{number of padding tokens},
\textbf{solution time of the packing algorithm} (after histogram and strategy creation), 
\textbf{number of strategies used},
\textbf{packing efficiency} (the fraction of non-padding tokens in the packed dataset), 
the \textbf{speed-up} achieved compared to not packing (depth 1), 
and the average number of sequences per sample (\textbf{packing factor}). 
For SPFHP, we analyse different (maximum) packing depth,
since packing is less efficient with smaller depth 
and we want to get a general understanding on how the packing depth
influences the processing time.
For NNLSHP, we focus on packing depth 3 because it packs the data sufficiently well.
For the speed-up analysis, we focus on the 
intelligence processing unit (IPU)~\cite{IPU} (IPU-M2000, $16$ accelerator chips),
BERT phase 2 pretraining setup as in Section~\ref{s:explc}.
A GPU dynamically loads the code into the accelerator;
in contrast, the IPU works with a static pre-compiled engine that gets loaded onto the chip at the start of the run.
While other approaches result in excessive padding or continuous changes of the code,
our approach can work with the same code for the whole dataset.
So in this setting the IPU architecture would especially benefit from our approach
since it avoids code changes.
Nevertheless, it can be applied to any implementation on GPU or TPU.
For determining the speed-up, we take advantage of the precompiled kernel.
Since time measurements are quite noisy,
we can profile the kernel and how many cycles it takes for processing a batch.
That way, we can determine the \textbf{overhead} (in cycles) 
from processing the additional attention masking and for unpacking the loss.
Combining \textbf{overhead} and \textbf{packing factor}, we get the \textbf{speed-up} estimate.
No experiment repetitions are required since the algorithms and measurements are deterministic.

\begin{table}[ht!]
\caption{Key performance results of proposed packing algorithms (SPFHP and NNLSHP) on IPU.
}
\label{tab:depth_study}
\begin{center}


\begin{tabular}{lrrrrrr}
\hline
 pack. & packing & EFF & p  & OH & realized\\
 depth & algorithm &  (\%)      &  &   (\%)  & speed-up \\
\hline
 1   & NONE  &  50.0 &  1.00 & 0.000 &   1.000 \\
 1   & SORT  &  99.9 &  2.00 & $\gg$100 & $\ll$1.000 \\
$\approx$10 & GREEDY  & $\approx$78 &  $\approx$1.6 & $\approx$4.48 & $\approx$1.5 \\
 2   & SPFHP  &  80.5 &  1.61 & 4.283 &   1.544 \\
 3   & SPFHP  &  89.4 &  1.79 & 4.287 &   1.716 \\
 3  & NNLSHP  &  99.7 &  2.00 & 4.287 &   \textbf{1.913} \\ 
 4   & SPFHP  &  93.9 &  1.88 & 4.294 &   1.803 \\
 8   & SPFHP  &  98.9 &  1.98 & 4.481 &   1.895 \\
max & SPFHP& 99.6 &  1.99 & 4.477 &   1.905 \\
\hline
\end{tabular}

\end{center}
 \textbf{Packing depth} describes the maximum number of packed sequences.
NONE is the baseline BERT implementation, whereas SORT corresponds to sorted batching,
and GREEDY concatenates sequences as they arrive until they would exceed $512$ tokens.
Setting no limit resulted in a maximum packing depth of $16$.
\textbf{EFF}iciency is the percentage of real tokens in the packed dataset.
The \textbf{p}acking factor describes the resulting potential speed-up compared to packing depth 1.
With \textbf{overhead (OH)}, we denote the percentage decrease in throughput due to changes to the model to enable packing 
(such as the masking scheme introduced in Section~\ref{s:masking}). 
The \textbf{realized speed-up} is the combination of the speed-up due to packing (the \textbf{packing factor}) 
and the decrease in throughput due to the overhead on the IPU.
It is used to measure the relative speed-up in throughput and the overhead from masking and loss adjustment.
SORT can be only efficient on GPUs (see Section~\ref{a:scaling}).
\end{table}

The main results for the performance metric evaluation are displayed in Table~\ref{tab:depth_study}.
The processing time for SPFHP 
on an Intel(R) Xeon(R) Gold $6138$ CPU with $2.00$GHz, $80$ nodes, and $472$G RAM
was around $0.03s$ and independent from the packing depth.
Classical First-Fit-Decreasing requires $87$-$120$s, a lot of memory, and scales almost linear
with the number of samples.
We see that the overhead slightly increases with packing depth but that the benefits
of packing outweigh the cost.
The best speed-up is obtained with NNLSHP at depth 3 which required $28.4s$ on the CPU for processing
and ran out of memory for larger depth.
With a value of $1.913$, it is close to the theoretical upper bound of $2.001$.
The results show that efficiency, packing factor, and speed-up can be viewed inter-changeably.
The amount of time needed to process a sample 
(a pack of sequences) is barely changed relative to the un-packed implementation. 
The packing factor, or the improvement in efficiency, effectively provide an accurate estimate of the speed-up.
GREEDY packing as used in T5 shows to be quite inefficient
and sorted batching (SORT) is highly efficient in avoiding
padding but the resulting different computational graphs cause
a major overhead on the IPU that exceeds the benefits of avoiding the padding.
Since we made our algorithm and code public available,
results have been reproduced with a different framework 
on the Habana Gaudi accelerator~\cite{Habana}
and confirmed that our approach is hardware and software independent
giving it a huge advantage over existing approaches.

\subsection{MLPerf\texttrademark~phase 2 pretraining setup:
learning curves and hyperparameter adjustment}
\label{s:explc}
For depth 1 (classic BERT) and NNLSHP with depth 3,
we additionally evaluate on the MLPerf\texttrademark$ $ version $0.7$ BERT pre-training benchmark~\cite{mlperf}. 
Briefly, this involves training from a standard checkpoint to a masked-language model accuracy of $71.2\%$ using 3 million sequences with a maximum length of $512$ tokens (refer to \cite{mlperf_task_details} for details).
Following this standardized benchmark supports reproduction of results even on other systems
and makes sure that the reproduction effort is moderate and setup rules are clearly documented.
We compare the resulting speed-up as well as the respective learning curves 
by evaluating the data on a held-out validation dataset.
The objective of this additional evaluation is to analyse if convergence behavior is changed by the packing strategy
and if the theoretical speed-up can be achieved in practice.

With packing, we effectively increase the average batch size by the packing factor ($\approx 2$).
However, with a different batch size, different hyperparameters are required (see Section~\ref{s:hparams})
and there is no mapping that will generate exact matching of results but only heuristics.
In a first comparison, we use the same hyperparameters when comparing packed and unpacked training
except for cutting the accumulation count by half. 
This way, we make sure that the batch size is constant on \textbf{average}
and we have the same amount of training steps. 
In the second comparison, we evaluate our heuristics and how they compensate the difference in batch size.
This setup is more desirable because it is beneficial to use the hardware to its full potential
and cutting the batch size by half usually reduces throughput.
In the third comparison, we compare two optimized setups.
In these two cases, packing takes half the amount of training steps.

The learning curves are displayed in Figure~\ref{f:lc}.
In the first setup, we see the curves almost matching perfectly when normalizing by
the numbers of samples processed.
Differences can be explained by the variation of the number of sequences in the packing batch,
and general noise in the training process.
Especially after the initial phase, the curves show a near-identical match.
The second setup shows bigger differences since changing the batch size and hyperparameters changes the training dynamics.
We observe slower convergence early on in training due to the increased batch size.
This is expected.
The adjustment of the learning rate actually decreases performance
probably because we correct for the increased number of sequences already in the modified loss.
With the adjustment of the decay parameter of LAMB, 
we see matching performance at the later training stages. 
However, it is not feasible to completely recover the early convergence behavior of the smaller batch size by adjusting the hyperparameters. 
For instance doubling the batch size of unpacked BERT to $3000$ and adjusting the LAMB decay parameters leads to more of a slow down in convergence than when running packed BERT with a batch size of $1500$ and a packing factor of $2$.
n practice, our implementations exceeds the estimated $1.913$ maximum speed-up. This estimate is based on the reduction in the computational work needed to process the dataset. However, packing the data also reduces the latency of the transferring the data to the device. Figure ~\ref{f:lc} shows that the realized total speed-up from packing exceeds $2x$.
%


\begin{figure*}[ht!]
    \centering
    \includegraphics[width=0.3\linewidth]{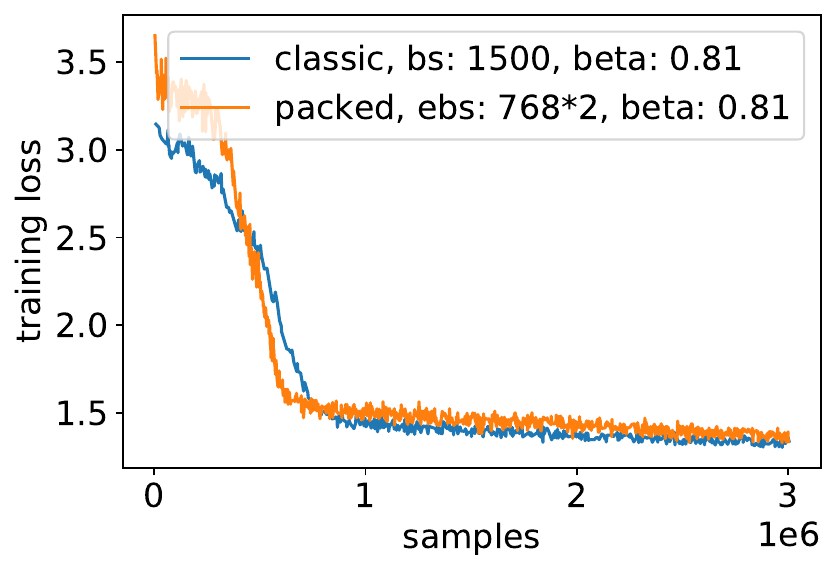}
    \includegraphics[width=0.3\linewidth]{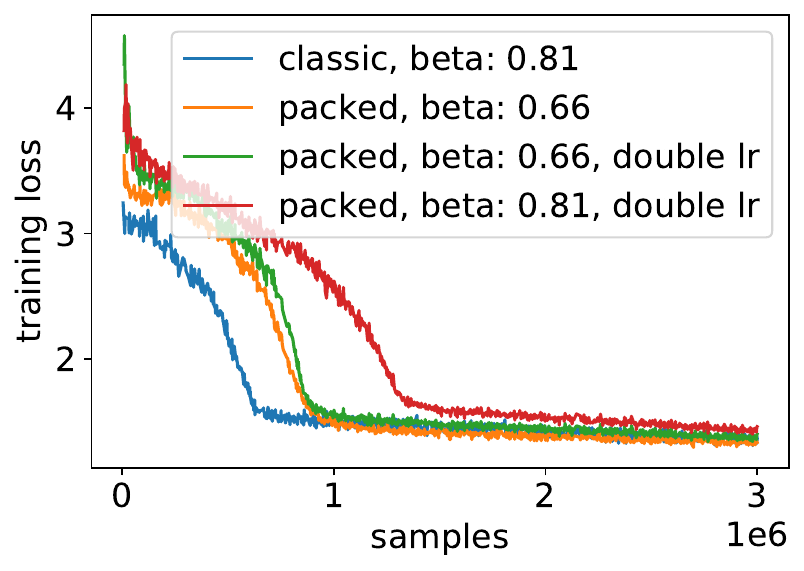}
    \includegraphics[width=0.3\linewidth]{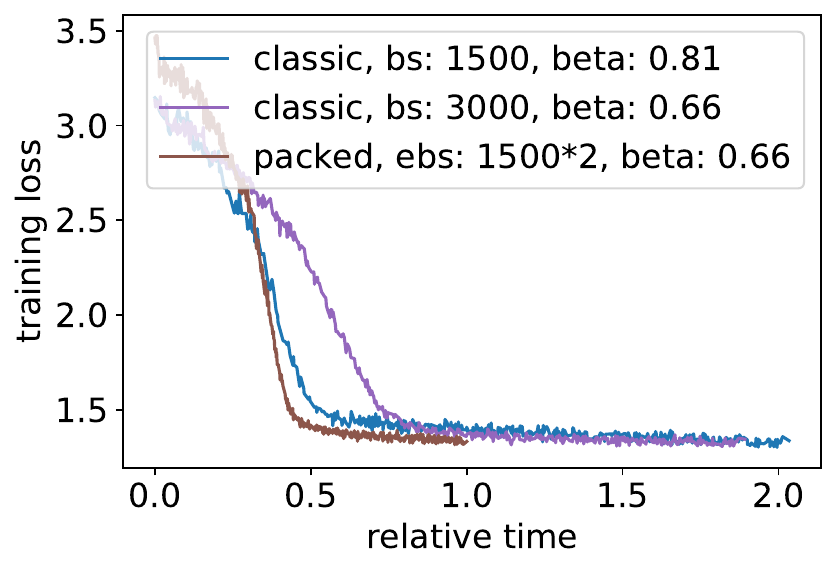}
    \caption{Comparison of learning curves for packed and unpacked processing, where all experiments converged to the target accuracy within the same number of training samples(3 million).
    [left] same \textbf{e}ffective \textbf{b}atch \textbf{s}ize (\textbf{ebs} is batch size times packing factor), 
    [middle] 
    different heuristic adjustments of the hyperparameters (batch size $1500$ for all runs, 
    such that \textbf{ebs} for packed runs is 
    $1500*2$),
    and [right] realized speed-up from packing (in excess of desired 2x).
    Further learning curves are provided in Section~\ref{as:lcmlperf}.
    }
    \label{f:lc}
\end{figure*}

\subsubsection{Ablation study}
\label{as:ablation}

So far, we have shown that with the introduced adjustments, we can match the accuracy of unpacked BERT.
In the following, we analyze in how far the masking adjustment is required.
In Figure~\ref{f:ablation}, we can see that without our adjustments, training loss and accuracy worsen drastically
and a longer training time does not lead to a recovery.
When not adjusting the positional embedding,
the loss and accuracy almost match.
However, the accuracy stalls at $71.8\%$ and does not reach the target accuracy of $72.1\%$.
So overall, both adjustments are crucial to avoid a reduction in performance.

When running packed BERT without the NSP loss 
but keeping everything else the same in a full training setup,
we observed that downstream performance on SQuAD reduced the F1 measure by $1.31\%$
and EM by $1.15\%$.
Hence, we do not consider removing NSP as done in approaches like RoBERTa and T5
as discussed in Section~\ref{as:nsp}.

\begin{figure}[t!]
    \centering
    \includegraphics[width=0.33\linewidth]{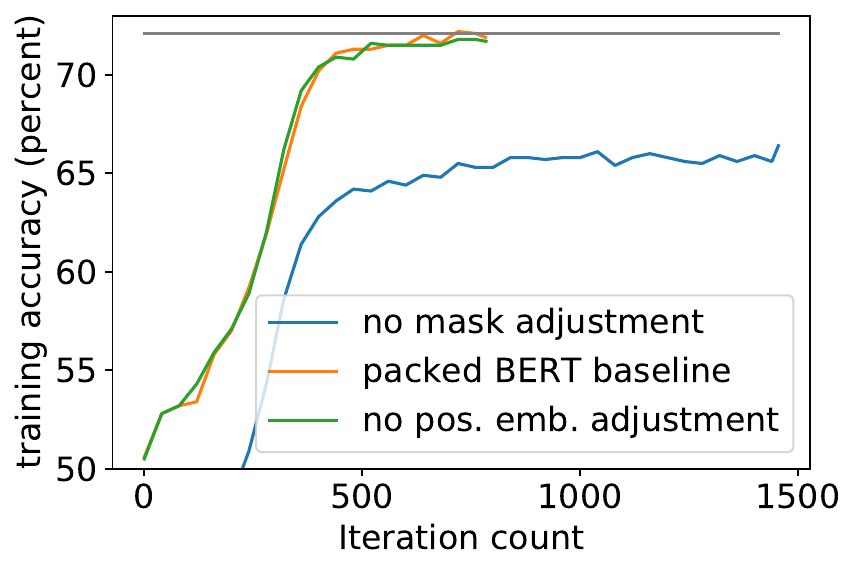}
    \includegraphics[width=0.36\linewidth]{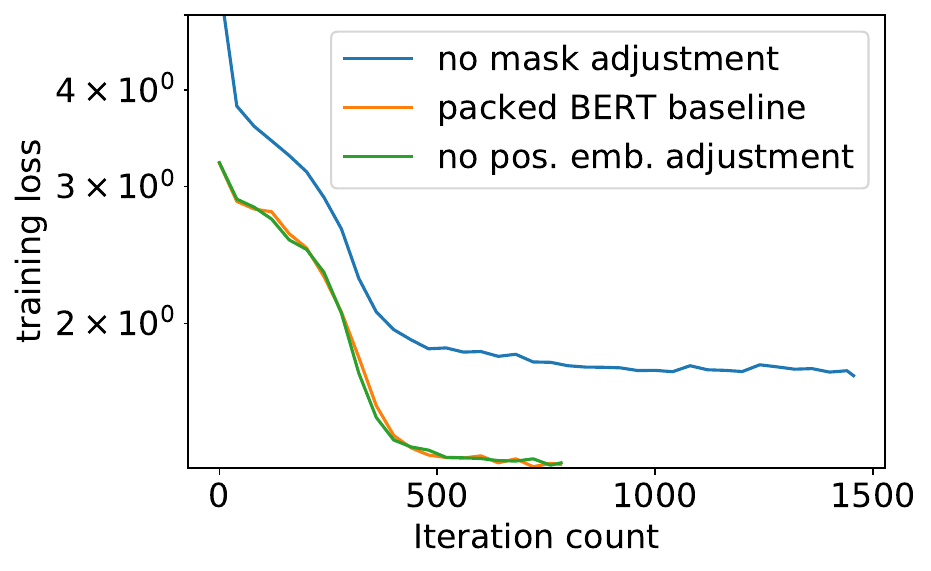}
    \caption{Comparison of learning curves with and without mask or positional embedding adjustment in our packed BERT approach.
    The grey accuracy baseline to reach is $72.1\%$.}
    \label{f:ablation}
\end{figure}

\subsection{Full pretraining and 
SQuAD finetuning}
\label{a:downstream}

Packing slightly violates the i.i.d. assumption of data. 
Thus, we have to check that downstream performance is not impacted by packing.
This is especially relevant in a full training setup without a starting checkpoint.
To this aim, we show that the packed and unpacked SQuAD 1.1 scores are comparable 
after a full-pretraining of BERT base and large plus fine-tuning. 
During pre-training, in order to avoid giving an advantage to packing by further hyperparameter tuning,
we reduce the gradient accumulation count for the packed BERT training
for phase 1 and phase 2 to match, on average, the total number of sequences that get processed before each weight update.
With this approach, we can use the same hyperparameters and number of training steps
but process each batch faster by avoiding the processing of padding.
This gives a slight disadvantage to the packed run in terms of machine utilization, as explained in Section~\ref{s:hparams}
and is different to the speedup analysis in Section~\ref{s:explc}.
For Phase 2, we use sequence length $384$ 
since longer range attention is not relevant for SQuAD 1.1.
The respective speed-ups from packing for BERT base and large are shown in Table~\ref{tab:pretraining_speedup}: the realized speed-up, measured as the quotient of the throughputs between the packed and unpacked runs, is slightly lower to the theoretical throughput (i.e. the packing factor) due to the packing overhead.
Further learning curves with the loss function and accuracy are provided in Section~\ref{as:lcfull}.
For the fine-tuning training on SQuAD 1.1, we do not use packing.
The scores, computed as the median of $10$ different seeds, are displayed in Table~\ref{tab:pretraining_squad}. They are comparable to the reference ones in \cite{Devlin2019}: for BERT base (resp. large) the F1 score is reduced by $0.2\%$ (resp. $0.3\%$) and the EM score increases by $0.3\%$ (resp. $0.02\%$).



\begin{table}[ht!]
\begin{center}
\begin{minipage}{.49\linewidth}
 \caption{Measured speed-ups in BERT\\ 
pretraining with packing.
}
\label{tab:pretraining_speedup}
\begin{tabular}{lccc}
\hline
      Model    & Sequence & Packing & Realized  \\
      size   & length   & factor  & speed-up \\ 
\hline
\multirow{2}{*}{base} & 128 & 1.17 & 1.15          \\ 
 & 384 & 1.70 & 1.68          \\ \hline
\multirow{2}{*}{large} & 128 & 1.17 & 1.15          \\ 
 & 384 & 1.70 & 1.69          \\ \hline
\end{tabular}
\end{minipage}
\begin{minipage}{.49\linewidth}
\caption{SQuAD 1.1 scores after BERT pretraining with packing.
}
\label{tab:pretraining_squad}
\begin{tabular}{lccc}
\hline
    Model & Configuration & F1 & Exact\\
    size &  &  & match  \\
\hline
\multirow{2}{*}{base} & \cite{Devlin2019}  & 88.5 & 80.8          \\ 
 & Packed & 88.32 & 81.03          \\ \hline
\multirow{2}{*}{large} & \cite{Devlin2019} & 90.9 & 84.1         \\ 
 & Packed & 90.65 & 84.12          \\ \hline
\end{tabular}
\end{minipage}
\end{center}
\end{table}

\subsection{Scaling analysis: Impact of accelerators count}
\label{a:scaling}
A further advantage of packing over competing un-padding approaches is the inherent load balancing provided by packing. 
So called un-padding approaches rely on dynamically launching custom kernels that ignore padding. 
A stated advantage of such implementations is the ability to avoid computing the complete (512 x 512) attention matrix. 
This provides additional computational savings compared to packing, where the attention matrix is computed in its entirety and then masked. 
Because of these additional savings, un-padding can exceed the theoretical upper bound for speed-up from packing ($2.013$ on Wikipedia).
As a result of the dynamic nature of the approach, 
the processing time with un-padding is different for each sequence in the batch, 
and the amount of time required to process a batch of sequences will be determined by the processing time of the longest sequence in the batch (with the sequences being processed in parallel).
Furthermore, in the multiple accelerator setting the processing time on each device will vary depending on the sequences in the batch that it receives. 
Devices which finish early have to wait for the slowest device to finish before exchanging gradients. 
This load-imbalance between the devices (and inside the batch) leads to a considerable decrease in the speed-up from un-padding as the number of accelerators is increased (see Figure~{\ref{f:comparespeed}} and Section~\ref{as:unpad}~\cite{appendix}).
In contrast, packing (our approach) is inherently load-balanced. The processing time on each accelerator is independent of the content inside the batch received by the device. Any number of accelerators can therefore operate in unison without having to wait for the slowest batch to process (all per-device batches are equally fast). 

\begin{figure}[ht!]
    \centering
    \includegraphics[width=0.5\linewidth]{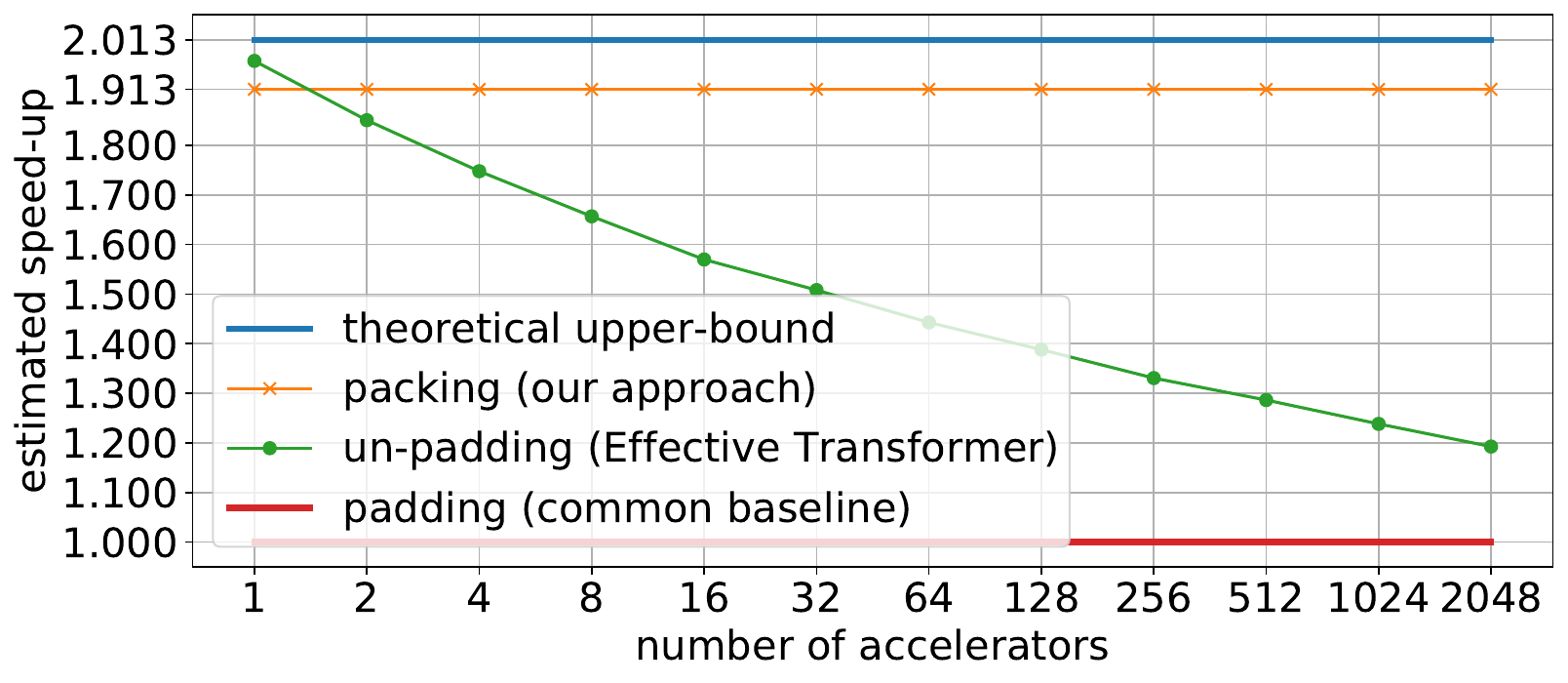}
    \caption{Comparison of the theoretical speed-up 
    as the number of accelerators is increased.}
    \label{f:comparespeed}
\end{figure}

\section{Conclusion}
Whereas packing is a well known concept,
this paper sheds a new light onto it in  multiple aspects.
First, we visualize the sequence length distributions of multiple datasets
not just from language domains but also audio and molecular domains
to emphasize that packing is beneficial for varied datasets, leading to more than 2x acceleration
by removing $50\%$ or more padding.
Second, we provide two new highly efficient packing approaches
based on established solvers
that leave almost no padding and that can tackle arbitrarily large datasets in a matter of seconds,
in contrast to existing approaches that are slow and suboptimal.
Third, we demonstrate that without adjusting the sequence processing algorithm
(e.g., BERT) to the packed sequences, predictive performance is reduced.
Thus, we propose several model adjustments that are all necessary to keep
predictive performance.
Last but not least, 
we prove that, thanks to such adjustments, predictive performance is
preserved as if no packing was used — but speed significantly increases,
especially since the adjustments come with an overhead of less than $5\%$.
We prove in our experiments that downstream performance is not impacted
by packing and that the anticipated 2x acceleration can be achieved.

In the future, 
an interesting direction is the packing of images of different sizes to help accelerate computer-vision applications.
This is especially relevant given the recent advances in the use of transformer-based approaches in the computer vision domain, 
for example the visual transformer~\cite{wu2020visual_transformer}.
Note that many images come in different shapes and resolutions
and packing them
can be a new approach to tackle this diversity instead of casting them 
all to the same resolution and shape.
Masking out the self-attention within transformers is easier to implement than avoiding cross-contamination of convolutions applied to packed images.
Future work should explore 
improving the performance of other models (RoBERTa, GPT-3, T5) by avoiding contamination between
non-contiguous segments from different documents.
Even BERT itself might benefit from avoiding contamination between the two concatenated segments.

{
\small

\clearpage
\bibliography{references}
\bibliographystyle{acm}
}


\appendix

\clearpage
\huge{Supplemental Material for \\``Efficient Sequence Packing without Cross-contamination:
           Accelerating Large Language Models without Impacting Performance''}\\
\normalsize


\small
\tableofcontents
\normalsize

\clearpage

\section{Broader impact}
\label{as:broad}
We showed that when pre-training BERT on Wikipedia,
the computational overhead taken to process padding tokens is roughly $50\%$. 
By eliminating this wasted computational time, 
the approach presented in this paper paves a way to halving the carbon footprint of training BERT-based models. 

Furthermore, our approach circumvents the need for custom kernels,
making the benefits of packing readily accessible 
to a broader audience of NLP practitioners.
As such, we are hopeful the research will have a positive impact on the NLP community, 
and do not see any disadvantage of using this approach.

The benefit of our algorithm is based on two assumptions: A skewed length distribution in the training dataset and a hardware setup that trains efficiently on a fixed batch size. If \textit{efficient training} is possible, with a variable batch size approaches like FasterTransformer and the fairseq sorted batch approach will result in the same or even larger benefits (due to smaller self-attention matrices). If the dataset is generated differently like in GPT models~\cite{Brown2020} and RoBERTa (FULL-SENTENCES)~\cite{Liu2019}, all sequences will be at full length and sequences cannot be concatenated and there is indeed no benefit in packing sequences. 
However, strategies that reach full sequence length usually combine segments from different unrelated document sources which can result in reduced performance.
Even in the normal BERT model, there might be this contamination between segments from different documents. Our paper introduced an approach to avoid the contamination between sequences. However, the same approach could also be applied to avoid contamination between segments and it remains future work to explore its benefits beyond BERT pretraining.

Future work would need to investigate the applicability of packing on text produced by different cultures and in different languages. 
We have already shown that the speed-up resulting from using our methods does not only occur
when pre-training BERT on Wikipedia but also on other datasets such as SQuAD and GLUE.
Furthermore, the sentence length distribution of the original English language text shows similar characteristics.
Our research leads us to believe that compressible distributions arise naturally in language tasks and beyond, for instance in DNA sequence lengths~\citeappendix{dnalength}, 
protein lengths~\citeappendix{protlength},
and speech (Section~\ref{as:librispeech}).
Many such sequence modelling workloads are based on variations of the BERT/transformer architecture and would therefore easily benefit from our acceleration.

Failures in NLP can have a big impact on society;
many technologies, such as Alexa, Siri, and Google Home, rely on them.
Whilst any errors arising from our approach can be avoided, one potential
source of error comes from the implementation.
Both the attention mask and the per-sequence loss need to be modified to support packing. 
These changes are significantly smaller than those required by custom kernels, 
however they may still be time consuming to implement and debug. 
To help mitigate the risk of any implementation errors, 
we share our reference implementations of the required changes
in the appendix.

\section{Reproducibility Statement}

All code for the packing algorithms is available in the appendix (Section~\ref{as:code}) 
and is directly linked to our GitHub page
to simplify the download and usage.
We even provide code for different variants and the histograms of sequence length for different datasets
that got tokenized for BERT training of fine-tuning.

To generate the learning curves, our public submission to MLPerf\texttrademark$ $
could be used
and we are preparing further code releases in other frameworks.
To encourage the use of the adjustments of models for packed sequences, 
we additionally provide detailed explanations and code snippets in TensorFlow.

Detailed mathematical formulas (Section~\ref{as:unpad} and~\ref{as:tech}), a theorem proof (Section~\ref{as:hyperlamb}), 
and complexity calculations (Section~\ref{as:bigoh}) are provided in this appendix
to support our claims in the paper in full detail.

\clearpage

\section{Related work}
\label{as:sota}
The most obvious way to reduce the extent of padding in the dataset is to group samples by size before batching (SORT), 
i.e., process the shorter samples together and longer samples together. 
BERT
is pre-trained in two phases, 
where the first phase uses sequence length $128$ for $900$K steps and the second phase uses sequence length $512$ for $100$K steps. 
However even by splitting the training in this way, the wasted compute due to padding is approximately $20 \%$ (see Figure~\ref{f:datasets}).
Other examples of this ``sorted batching'' approach can be found in 
Faster Transformer~\cite{fastertransformer}, lingvo~\cite{shen2019lingvo} fairseq~\cite{ott2019fairseq}, and RoBERTa~\cite{Liu2019},
which group samples of similar size together in one batch and fill up with padding only to the maximum length in this batch.
This approach can be highly efficient
in cases where the dataset length is multiple orders of magnitude larger than the batch size and the number of different sequence lengths. 
Despite its high computational efficiency, this approach has multiple drawbacks. 
We outline these below and propose an alternative which maintains the high efficiency, 
while also circumventing the downsides. 
Firstly, sorting the data can reduce the overall convergence speed when the batch size is large
because it violates the i.i.d. assumption on the data distribution~\cite{Bottou2018,Meng2019}.
Secondly, processing batches with shorter sequence lengths under-utilizes the compute 
compared to running the same batch size with a longer sequence length. 
For GPUs, a common heuristic to mitigate this effect is to adjust the batch size 
to keep the number of processed tokens near constant~\cite{ott2019fairseq,Liu2019}.
In general however, the relationship between the sequence length and the optimum batch size is more complex 
and maximizing compute utilization can require the model 
to be sharded differently across multiple accelerators. 
Avoiding this, often manual process, is important for ease of use 
and the portability of methods across different hardware architectures. 
Thirdly, modern NLP applications are optimized 
and compiled for fixed tensor sizes using tools such as XLA~\cite{tensorflowxla,Fedus2021}, which provides a $\approx7x$ 
acceleration for BERT in MLPerf\texttrademark~\cite{mlperf} compared to the non-XLA baseline~\cite{tensorflowxla}.
Changing the sequence length or batch size requires re-optimization of the computational graph and recompilation of the program for the new tensor shapes.
For complex models such as BERT, optimization and recompilation take a non-negligible amount of time.
Even if one pre-compiled and cached all combinations of batch size and sequence length, 
the kernels would still need to be re-uploaded to the device every time the shapes change.
Depending on how frequently the tensor shapes change, the overhead from switching kernels adds up.
To avoid these issues, it is preferable (and common) to work with fixed tensor shapes for the entire duration of the training run. 


More advanced approaches for reducing the padding overhead rely on custom computational kernels. 
Loosely these are referred to as \textbf{``un-padding''} approaches.
In Effective Transformer~\cite{effectivetransformer}, the input batch is provided as a padded matrix but padding values are dynamically removed and restored during different calculation stages. 
While un-padding implementations are highly sophisticated and are able to completely circumvent the processing of padding tokens, they introduce a significant overhead due to the multiple GPU kernel launches (i.e., one kernel per sequence rather than one kernel per batch).
Additionally the time to process each batch will fluctuate depending on the sequence lengths in each batch, 
i.e., batches with only shorter sequences will typically be processed faster.
When working with more than one accelerator, this variability in throughput results in all devices in the cluster 
waiting for the device with the most compute intensive batch to finish processing.
As such, un-padding approaches are not appropriate for deployment on large clusters.
The \textbf{``packing''} based approach introduced in this paper offers significant advantages over un-padding approaches. 
Firstly, packing is implemented directly at the framework level and requires no additional custom kernel implementations. 
Secondly, the processing time for each batch is independent of the content of the batch, allowing the packing based approach to maintain the same speed-up whether running on a single device or thousands.

While we demonstrate the effectiveness of packing specifically on the Wikipedia dataset, 
packing SQuAD~\cite{squad} or GLUE datasets~\cite{warstadt2018neural,wang2019glue} 
for BERT also leads to significant speed-ups (some in excess of 9x)
(Sections~\ref{as:squad} and~\ref{as:glue}).
The effectiveness of packing is a result of
both the length distribution of the documents in the source datasets
as well as the different text preprocessing steps for BERT ~\cite{DevlinGitHubpretraining}. 
The use of bi-directional self-attention in BERT implies that the input sequences should contain complete sentences.
If a sentence is abruptly cut short, 
the hidden state on other (preceding) tokens in the sequence will be affected.
Language models with causal attention (only attending to previous tokens in the input) 
do not have this issue to the same degree.
For such models, if a sequence is cut short at an arbitrary token, 
the other tokens (which occur earlier in the sequence) will not be affected.
This ability to cut sequences arbitrarily completely trivializes the packing problem for models based on causal attention.
For instance, GPT-3~\cite{Brown2020} is trained with a maximum sequence length of $2048$ 
where a single sequence may contain multiple segments of sentences separated by a special end of segment token. 
The last segment in each sequence is simply cut to meet the sequence length requirement
making the packing problem trivial and avoiding any padding.
In the interest of computational efficiency GPT-3 does not mask the attention between different segments in a sequence.
In contrast, the packing approach presented in this paper introduces a mask in the attention layer 
(see Section~\ref{s:masking}) to prevent cross-contamination between examples in a pack. 
Note, we mask the interaction between different sequences 
and not between different sentences or segments in the same sequence.
This ensures that the characteristics of the original dataset and model are matched as closely as possible.
RoBERTa and many other models in production like T5~\cite{Raffel2019} 
use a similar packing approach as GPT-3,  
combining full sentences/sequences with GREEDY packing (first come first concatenate)
and also separation tokens or additional padding.
The RoBERTa ablation study shows that mixing of sentences from different 
documents reduces accuracy, but it is used nonetheless for load balancing reasons
which indicates that sorted batching is not sufficient.

There might be hidden code snippets as in the deprecated \href{https://github.com/tensorflow/tensor2tensor/commit/c9144dfa5f514cab529f487b069415daee5e211e#diff-3c271923bb62bdd35f3b0f6a2c94ea320825d834bbf51334a9acbc04fbea9763R538}{tensor2tensor} library
that seems to implement the same attention masking mechanism as we propose. 
However, these lack a sufficient documentation, testing, evaluation, ablation, 
and communication to the research community to be considered state of the art in NLP research.
More general, to the best of our knowledge and the knowledge of many other engineers 
and researchers that we were in contact with,
there is no other research work that focuses on packing in NLP.


\section{Theorem on LAMB hyperparameter correction heuristic}
\label{as:hyperlamb}

With packing, the effective batch size changes and hence hyperparameters of the LAMB optimizer~\cite{You2019} need to be adjusted.
For a packed dataset with a packing factor $p$, we update the decay parameters as:
$
\overline{\beta_1} := \beta_1^{p},\, \overline{\beta_2} := \beta_2^{p}.
$
For instance if $\beta_1 = 0.81$ for the un-packed dataset, 
then for a packed dataset with an average of $2$ sequences per sample one should use a value of $0.81^2 \approx 0.66$ instead.
Assuming no or only minor changes in gradients and $p$ being a natural number, we can prove that this heuristic
is the exact solution to make sure that momentum and velocity in LAMB are unaffected by packing.
This can be proven by mathematical induction. 
Note that $p\geq 1$ by definition.

\begin{thm}
For any $p \in \mathbb{N}$ and assuming that respective gradients on a batch of $b$ random samples 
are (approximately) the same, choosing
\begin{align}
\overline{\beta_1} &:= \beta_1^{p}\\
\overline{\beta_2} &:= \beta_2^{p}.
\end{align}
as hyperparameters in the LAMB optimizer ensures that the momentum and velocity after $p$ separate update steps are the same
as with one packed update step with $p\times b$ samples.
\end{thm}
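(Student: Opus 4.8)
The plan is to prove the statement by mathematical induction on the packing factor $p \in \mathbb{N}$, as the theorem itself suggests. First I would recall the LAMB update rules for the momentum and velocity estimates. At step $t$ the momentum satisfies $m_t = \beta_1 m_{t-1} + (1-\beta_1) g_t$ and the velocity satisfies $v_t = \beta_2 v_{t-1} + (1-\beta_2) g_t^2$, where $g_t$ is the gradient on the batch at that step. The key modelling assumption, stated in the theorem, is that the gradient is (approximately) the same across the $p$ consecutive unpacked steps, say $g_t = g$ and $g_t^2 = g^2$ for all of them. Under this assumption the comparison reduces to showing that applying the standard update $p$ times with fixed gradient $g$ and decay $\beta_1$ produces the same $m$ as applying a single update with the modified decay $\overline{\beta_1} = \beta_1^{p}$ and the same fixed gradient, and analogously for $v$ with $\beta_2$.

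Next I would set up the induction cleanly. The base case $p=1$ is immediate since $\overline{\beta_1} = \beta_1^{1} = \beta_1$, so a single modified step coincides trivially with a single standard step. For the inductive step, I would start from an arbitrary prior momentum $m_0$ and unroll $p$ standard updates with constant gradient $g$. A direct computation gives the closed form
\begin{equation}
m_p = \beta_1^{p}\, m_0 + (1-\beta_1)\, g \sum_{k=0}^{p-1} \beta_1^{k} = \beta_1^{p}\, m_0 + (1-\beta_1^{p})\, g,
\end{equation}
where the geometric sum collapses via $(1-\beta_1)\sum_{k=0}^{p-1}\beta_1^{k} = 1-\beta_1^{p}$. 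The right-hand side is exactly one LAMB momentum update with decay $\overline{\beta_1} = \beta_1^{p}$ applied to $m_0$ with gradient $g$, which is precisely the claimed single packed update. The identical argument with $\beta_1 \mapsto \beta_2$ and $g \mapsto g^2$ establishes the velocity claim, since the velocity recursion has the same affine-in-the-previous-state structure.

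I would emphasise that the essential structural fact is that both recursions are affine fixed-point iterations of the form $z \mapsto \beta z + (1-\beta)c$ with a constant forcing term $c$ (here $c=g$ or $c=g^2$), and composing $p$ such maps yields the map with contraction factor $\beta^{p}$ and the \emph{same} fixed point $c$. This is why raising the decay to the power $p$ is the exact correction and why no rescaling of the forcing term is needed. The main obstacle, and the only place where the result is a heuristic rather than an exact identity in practice, is the assumption that the gradients $g_t$ are constant across the $p$ micro-steps; in reality they differ because each micro-step updates the parameters. I would note explicitly that the theorem only claims exactness \emph{under} this constant-gradient assumption, so the proof is a genuine equality rather than an approximation, and that the bias-correction and the trust-ratio normalisation in LAMB do not enter because they act after $m$ and $v$ are formed and are therefore unaffected once $m_p$ and $v_p$ are shown to match.
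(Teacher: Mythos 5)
Your proof is correct and follows essentially the same route as the paper's: both arguments unroll the $p$ constant-gradient LAMB updates (yours via the geometric-sum closed form, the paper's via explicit induction, which amounts to the same algebra) to obtain $m_p = \beta_1^p m_0 + (1-\beta_1^p)g$, and both observe that the single packed update matches because the packed gradient is a \emph{mean} over the $p\times b$ samples and hence still equals $g$. Your closing remarks on the affine fixed-point structure and on why bias correction and the trust ratio are unaffected are a welcome clarification beyond what the paper states, but they do not change the substance of the argument.
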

\begin{proof}
\begin{itemize}
\item[] 
\item \emph{Base Case}: \\
    For $p=1$ the left and right side of the equation are the same
    which matches exactly the unpacked case.
    Hence, the theorem holds for $p=1$. 
\item \emph{Inductive hypothesis}:
    Suppose the theorem holds for all values of $p$ up to some $k$, $k \geq 1$.
\item \emph{Inductive proposition}:
    The theorem holds for $p=k+1$.
\item \emph{Proof of the inductive step}:
Let $l$ be the loss function, $w_t$ the weight vector after $t$ updates, 
and $x^t_1, \ldots, x^t_b$ the respective underlying data to calculate the gradient $g_t$.
For a single update step in LAMB with batch size $b$ samples, 
we compute the gradient
\begin{equation}
g_t=\frac{1}{b}\sum_{i=1}^{b} \frac{\partial l}{\partial w}(x^t_i, w^t).
\end{equation}
Since $g_1\approx g_2 \approx\ldots\approx g_{k+1}$,
We have with the inductive hypothesis and the definitions in LAMB:
\begin{align}
m_k &=\beta_1^k m_0 + (1-\beta_1^k)g_1\\
v_k &=\beta_2^k v_0 + (1-\beta_2^k)g_1^2
\end{align}
Now we can calculate (with $g_1\approx g_{k+1}$)
\begin{align}
m_{k+1} &=\beta_1 m_k + (1-\beta_1)g_{k+1}\\
        &\approx \beta_1\left(\beta_1^k m_0 + (1-\beta_1^k)g_1\right)+(1-\beta_1)g_1\\
        &=\beta_1^{k+1} m_0 + (1-\beta_1^{k+1})g_1
\end{align}
The calculation for $v_k$ is the same.
As reference for a packed update with $p=k+1$ with $\overline{\beta_1}$ and $\overline{\beta_2}$, we would get
\begin{equation}
    g=\frac{1}{pb}\sum_{j=1}^p\sum_{i=1}^{b} \frac{\partial l}{\partial w}(x^j_i, w^1)
    =\frac{1}{p}\sum_{j=1}^p\left(\frac{1}{b}\sum_{i=1}^{b} \frac{\partial l}{\partial w}(x^j_i, w^1)\right)
    \approx \frac{1}{p}\sum_{j=1}^p g_1 = g_1
\end{equation}
since we are calculating gradients over $b$ samples which are assumed to be approximately the same.
Consequently, the updates for momentum and velocity would be 
\begin{align}
\overline{m_k} &=\overline{\beta_1} m_0 + (1-\overline{\beta_1})g_1\\
\overline{v_k} &=\overline{\beta_2} v_0 + (1-\overline{\beta_2})g_1^2.
\end{align}
Hence, $\overline{\beta_1} = \beta_1^{k+1}$ and $\overline{\beta_2} = \beta_2^{k+1}$ 
is required to map to the formula with the consecutive updates (for the same amount of data).
\item \emph{Conclusion}: The theorem holds for any $p\in \mathbb{N}$.
\end{itemize}
\end{proof}

Since we proved that the formulas 
$
\beta_1 := \beta_1^{p},\, \beta_2 := \beta_2^{p}.
$
hold for all $p\in \mathbb{N}$, $p\geq 1$,
it is safe to assume that it is an appropriate heuristic for all $p\in \mathbb{R}$, $p\geq 1$.


\section{Un-padding scaling estimate}
\label{as:unpad}
To demonstrate the severity of the load-imbalance issue in Section~\ref{a:scaling}
we consider the scaling of an un-padding approach with a per-device batch size of $32$ running on eight devices~\cite{unpadding_results}. 
From there, we readily extrapolate the performance to both larger and smaller cluster sizes by fitting a Gumbel distribution to the observed processing times as described in this section.
On a single device with batch size $32$ un-padding outperforms packing and exceeds the theoretical upper-bound for packing. 
As the number of devices increases to two or more, the proposed packing approach outperforms the dynamic un-padding approach. 
On a cluster with $32$ accelerators the speed-up from un-padding drops to $50\%$ and with $2048$ devices the speed-up is only $30\%$. 
In contrast, the speed-up due to packing is independent of the number of accelerators and stays at $1.913$.
Switching to a smaller batch size would reduce the load-imbalance issue to some extent, 
but would also result in under-utilization of the available memory and compute.

Firstly, we retrieve the per-batch processing time for an un-padding implementation running pre-training on the Wikipedia dataset from \cite{unpadding_results}. These processing times were obtained using 8 GPUs each with a per-device batch size of 32. We also retrieve the throughput numbers for the same system running with padding from \citeappendix{padding_results} and use that as the baseline to compare the un-padded throughput against.

The throughput on the 8 GPU system is effectively limited by the slowest of the eight batches being processed in parallel. The Gumbel distribution is particularly suited to modelling the maximum or minimum value of a fixed size collection of i.i.d. samples (in this case batches).
We observe that on 8 GPUs the throughput (i.e. speed-up) distribution indeed closely resembles a Gumbel distribution with $\alpha_1=1.6$ and $\beta_8=0.13$ as shown in Figure~\ref{f:gumbel}.

\begin{figure}[ht!]
    \centering
    \includegraphics[width=0.95\linewidth]{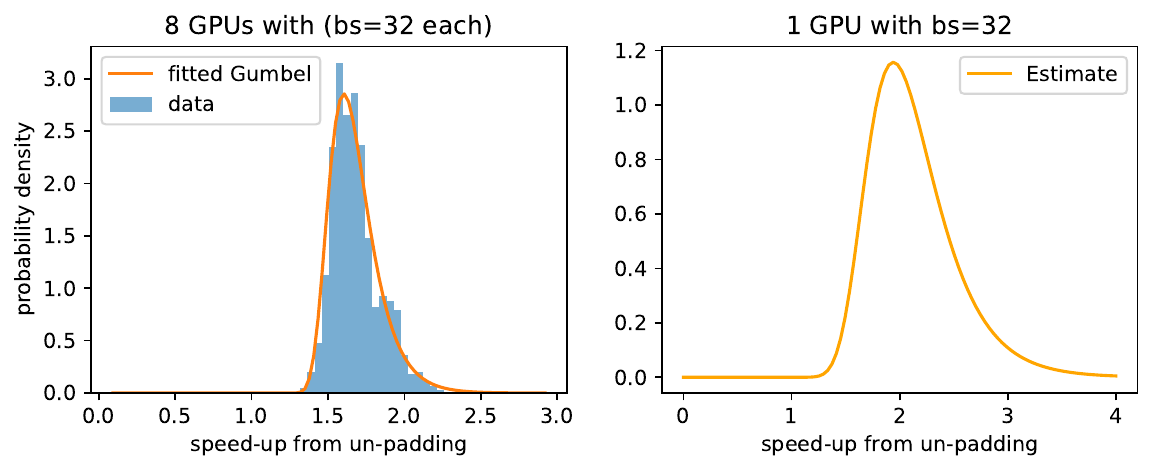}
    \caption{Left: Speed-up from un-padding on 8 GPUs closely resembles a Gumbel distribution. Right: statistical estimate of speed-up distribution on a 1 GPU system running un-padding}
    \label{f:gumbel}
\end{figure}

We can extrapolate the performance on the 8 GPU system to larger clusters by recognizing that the processing time for each cluster is effectively determined by the slowest batch being processed. Specifically, we could randomly sample (without replacement) two processing times for the 8 GPU system, and record the max of the two as the processing time for a system of $16$ GPUs. 
However, this simple approach is too sensitive to outliers in the data and would result in an under-estimate of the performance of un-padding on large systems.
We mitigate the effect of outliers in the data by avoiding directly sampling the processing times. Instead, we fit a Gumbel distribution to the processing times of a single batch of size $32$ running on one GPU.
To perform the fit, we observe that the cdf on one GPU ($P_1$) is related to the cdf on 8 GPUs ($P_8$) through
\citeappendix{kotz2000extreme}(section 1.3):
\begin{eqnarray}
(1 - P_8(s)) = (1 - P_1(s))^8
\end{eqnarray}
In other words, if the speed-up on the cluster is larger than $s$, this implies that the speed-up on every GPUs in the cluster was at least $s$. Assuming $P_1$ is Gumbel and given the 8 GPU Gumbel parameters $\alpha_8$ and $\beta_8$, we need to fit two parameters, $\alpha_1$ and $\beta_1$.  
Consequently for the median ($s=\alpha_8 - \beta_8 \ln(\ln(2))$, $P_8(s)=0.5$), we have:
\begin{eqnarray}
0.5 = (1 - P_1(\alpha_8 - \beta_8 \ln(\ln(2))))^8\, .
\end{eqnarray}
And since $P_8$ is Gumbel, we also have an equation for the mode ($s=\alpha_8$, $P_8(s)=e^{-1}$):
\begin{eqnarray}
(1 - e^{-1}) = (1 - P_1(\alpha_8))^8\,.
\end{eqnarray}
We solve these two non-linear equations simultaneously using the standard SciPy optimization package.


\begin{lstlisting}[language=Python, caption=Infer Gumble distribution parameters.]
import numpy as np
from scipy import stats, optimize
alpha_8 = 1.6038
beta_8 = 0.1288
def g(x):
    alpha_1, beta_1 = x
    dist = stats.gumbel_r(loc=alpha_1, scale=beta_1)
    # Equations for median and mode
    median = alpha_8 - beta_8*np.log(np.log(2))
    equation1 = 0.5 - dist.sf(median)**n_gpu
    mode = alpha_8
    equation2 = (1-np.exp(-1)) - dist.sf(mode)**n_gpu
    return (equation1**2 + equation2**2)

res = optimize.minimize(g, [alpha_8, beta_8], method="Nelder-Mead")
alpha_1, beta_1 = res.x
\end{lstlisting}

The resulting estimated speed-up Gumbel distribution for a single device has $\alpha=1.94$, $\beta=0.108$ and is shown in Figure~\ref{f:gumbel} [right].
To simulate the performance of a cluster of size $n$ with a batch size of 32 per device,
we take the minimum over $n$ samples from this distribution.
Repeating this process to generate many samples allows us to estimate the expected speed-up for any given cluster size.
Unfortunately, we cannot make any statistical inference about the processing times of individual sequences since the data is only provided at the granularity of 32 sequences per batch, and it is not clear how much of the computation is done in parallel and how much in serial.

\clearpage

\section{Technical background on packing}
\label{as:tech}

\subsection{Canonical packing problem}
\label{as:packingclass}

The bin packing problem deals with the assignment of items into bins of a fixed capacity such that the number of utilized bins is minimized.
In the canonical formulation of the packing problem a vector $s(i)$ 
of length $n$ is used to represent the items being packed, where $s(i)$ denotes the length of the i-th sequence/item.
The allocation of items into bins is tracked through the assignment matrix $B$, where $B_{ij} \in \{0, 1\}$ states whether the i-th sequence should be
placed into the j-th bin. 
In the worst case scenario, every item is assigned to its own bin, thus $B \in \mathbb{R}^{n \times n}$.
Notably, $s$ grows linearly in the number of sequences/items being packed and $B$ grows with the square.
To mask out unused bins $y_j \in \{0, 1\}$, denotes whether the j-th bin is being used. 
The optimization objective is to minimize the sum of $y_j$
while making sure to assign each $s_i$ to exactly one bin and not exceeding the maximum bin capacity $s_m$ for each bin.
This problem formulation is well known as bin packing~\cite{Korte2012}. 

\begin{equation}
\begin{aligned}
\min_{y\in\{0,1\}^n,B\in\{0,1\}^{n\times n}} \quad & \sum_{j=1}^{n}{y_{j}} &
\text{Minimize the number of bins.} \\
\textrm{s.t.} \quad & \sum_{j=1} b_{ij} =1\quad   \forall i & 
\text{Assign each length/sequence to only one bin.}\\
  &\sum_{i=1}^n s(i)b_{ij}\leq s_m y_j \quad  \forall j & \text{Cumulative length cannot exceed capacity.}\\
\end{aligned}
\end{equation}

Bin packing is a strongly NP-complete~\cite{Korte2012} problem.
Producing an exact and optimal solution is possible with a variety of existing algorithms, for example with the 
branch-and-cut-and-price algorithm~\citeappendix{Belov2006}. 
However, given that we want to
apply it for very large $n$ (16M for the Wikipedia dataset)
an approximate approach is required. 

\subsection{Approximate bin packing problem}
\label{as:approximatepacking}

Approximate packing approaches are divided into online and offline algorithms~\cite{johnson1973near}. 
Online algorithms process incoming sequences one-by-one in a streaming fashion, 
whereas offline algorithms have a holistic view of all samples to be packed but typically still operate on a per sample basis.
This results in best case time and memory complexities of at least $O(n \log(n))$
and solutions that can sometimes be far from optimal, 
especially for the online algorithms which do not have access to a holistic view of the datasets.
The simplest online approach (next-fit) would be to keep a single open bin at any given time. An incoming sequence is added to this open bin if it fits, otherwise the bin is closed (can never be appended to again) and a new one is opened to accommodate the new sequence~\cite{johnson1973near}. 
In the case of the Wikipedia pre-training dataset almost $25\%$ of the sequences are of length $512$, 
which makes this approach very inefficient since bins would frequently be closed because the incoming sequence did not fit. 
More specifically, this approach is not able to efficiently combine one long sequence with one shorter sequence, 
when the number of long sequences is large.
The algorithms that come closest to the approaches proposed in this paper are
the online harmonic-k algorithm~\cite{Lee1985},
which creates harmonic sized bins for the assignment decision, and the offline Modified First Fit Decreasing method~\cite{Johnson1985,Yue1995},
which sorts the data, groups it into $4$ size categories and defines a strategy
adjusted to these sizes.

In our approaches, we make three major simplifications.
We make the problem of bin packing less dependent on $n$ by
operating on the histogram of sequence lengths with bin size 1.
Hence, we replace $s(i)$ by its histogram $b$ and the bin assignment $y, B$ by
a mixture of strategies $x$, where the set of all available packing strategies
is modeled as the matrix $A$ (see also Section~\ref{as:pmatrix}).

Then, we do not solve the full packing problem but
focus on a fixed packing depth (in other words the well known 3-partition problem).
Last but not least, we solve the limited depth packing problem only approximately either with a
non-negativity-constrained linear least squares~\cite{Bro1997} (NNLS) followed by rounding to nearest integer solution
or by applying Worst-Fit~\cite{Johnson1985,Yue1995} to the histogram, sorted from largest to smallest
(in contrast to using an unsorted dataset).
An exact solution would not be appropriate, since the 3-partition problem
is strongly NP-complete~\citeappendix{Garey1990} as well.

\subsection{Definitions}
\label{as:definitions}
In this section, we standardize the terms used throughout our methods. 
Firstly, the terms \textit{pack} and \textit{bin} may be used interchangeably. Secondly, the presented packing schemes impose a limit on how many sequences can be packed into any given bin. This limit is referred to as the maximum \textit{packing depth}. For simplicity, we require the different sequence lengths in a pack to always add up exactly to the bin capacity $s_m$ (we can always generate a padding sequence of just the right length to fill-up the bin).
A \textit{packing strategy} is a sorted list of sequence lengths, for example $[5, 7, 500]$, 
such that the total sequence length is no more than $s_m$ 
and the number of sequences in the pack does not exceed the maximum \textit{packing depth}.
The output of a packing scheme is typically as set of \textit{packing strategies} and the corresponding \textit{repeat count} for each strategy 
stating how many times each strategy should be repeated in order to cover the entire dataset. The strategy \textit{repeat count} is also referred to as the \textit{mixture} of strategies.
The objective of the packing algorithm is to jointly design a set of packing strategies and their repeat counts, such that the amount of \textit{padding} is (approximately) minimized.
The presence of \textit{padding} in the packs can either be implicit or explicit. For instance for $s_m=512$ the strategy [2, 508] has an implicit padding of 2 (needed to fill the pack up to the $s_m$). Alternatively, the strategy repeat count may over-subscribe a particular sequence length leading to explicit packing. For instance constructing a pack of [4, 508] may require a new \textit{padding} sequence of length 4 be constructed, if there are not enough sequences of that length in the dataset. The packing algorithms, we present, use both representations.

\subsection{Non-negative least squares histogram-packing}
\label{as:nnlspacking}
The first algorithm proposed in this paper is suitable for settings where it is desirable to achieve a high packing efficiency with a limited packing depth. The algorithm is deterministic and has three major components described in Sections~\ref{as:enumerating}, \ref{as:pmatrix} and \ref{as:nnls_solution}.
\subsubsection{Enumerating packing strategies of fixed packing depth}
\label{as:enumerating}
Listing all unique ways of packing up to a maximum \textit{packing depth} can be achieved through dynamic programming.
We only consider packing at most up to $3$ sequences per pack.
This is the smallest packing depth that can eliminate the need for most padding on the Wikipedia dataset.
Increasing the depth to $4$, increases the size of the packing problem drastically and yields no throughput benefit.\footnote{For data distributions that are more skewed
than Wikipedia this might look different.}
With only two sequences, packing would be not as efficient since the distribution on sequence length is not symmetric.
We use dynamic programming to enumerate all feasible ways/strategies 
that up to $M$ sequences of length $1-512$ can be packed into a bin of length $512$. 
For example, a packing strategy may be $[512]$ or $[6, 506]$ or $[95, 184, 233]$. 
To avoid listing the same strategy multiple times, 
we enforce the sequence lengths within a pack to occur in sorted order, 
for example, $[95, 184, 233]$ is equivalent to $[184, 95, 233]$ and should only be listed once. 
This reduces the search space as well as the space of potential solutions
by a factor of $6$ approximately
and thus significantly accelerates the optimization process.
If you had the same strategy repeated $6$ times instead of having just one 
instance of that strategy with weight $X$, 
you will have six instances with weight $x/6$ (for example, or any other distribution).
This would conflict with integer rounding of the solutions 
and with convergence of optimization algorithms.

\subsubsection{Constructing the packing matrix}
\label{as:pmatrix}

The number of rows in the packing matrix is equal to the number of different sequence length categories. 
For instance, if we are using a granularity of 1 token to distinguish between different sequence lengths, 
then there are ``maximum sequence length'' rows. 
Each column of the matrix corresponds to a valid packing strategy (given the depth of packing). 
An example packing matrix for fitting up to $3$ sequences into sequence length $8$ is given in 
Table~\ref{tab:packing_matrix}. 
Each column of the matrix represents a packing strategy. 
For instance, the first column represents the strategy [1, 1, 6] of packing two length-1 
sequences and one length-6 sequence together to form a pack of length 8.
The number of strategies (and columns in the matrix) is discussed in Section~\ref{as:bigoh}.
For a packing depth of 3 and maximum sequence length, 
we obtain around $\frac{s_m^2+6 s_m + 12}{12}$ strategies. 
For depth 4, around $\frac{s_m(s_m+4)(2s_m+1)}{288}$ more get added.

\begin{table}[ht!]
\caption{Example packing matrix for sequence length $8$.
Columns represent different kinds of packs.
Rows represent the number of sequences in these packs
with a certain length.
The last column represents a pack with only a single sequence of length six.
}
\centering
\resizebox{0.4\textwidth}{!}{%
\begin{tabular}{@{}|c|c|c|c|c|c|c|c|c|c|@{}}
\hline
2 & 1 & 1 & 1 & 0 & 0 & 0 & 0 & 0 & 0 \\ \hline
0 & 1 & 0 & 0 & 2 & 1 & 1 & 0 & 0 & 0 \\ \hline
0 & 0 & 1 & 0 & 0 & 2 & 0 & 1 & 0 & 0 \\ \hline
0 & 0 & 1 & 0 & 1 & 0 & 0 & 0 & 2 & 0 \\ \hline
0 & 1 & 0 & 0 & 0 & 0 & 0 & 1 & 0 & 0 \\ \hline
1 & 0 & 0 & 0 & 0 & 0 & 1 & 0 & 0 & 0 \\ \hline
0 & 0 & 0 & 1 & 0 & 0 & 0 & 0 & 0 & 0 \\ \hline
0 & 0 & 0 & 0 & 0 & 0 & 0 & 0 & 0 & 1 \\ \hline
\end{tabular}%
}
\label{tab:packing_matrix}
\end{table}

\subsubsection{Solution of the NNLS approximate packing problem}
\label{as:nnls_solution}

A solution of the packing problem
is the mixture of packing strategies $x$ that minimizes the amount of padding in the packed dataset. 
We solve directly for the mixture (positive real numbers) 
and recover the padding as the negative portion of the residual 
(see Section~\ref{as:padding_and_residual}).
\begin{eqnarray}
\begin{aligned}
\min_{x\in\mathbb{R}^m} \quad &
    \|A \cdot x - b\|^2 \\
    \text{s.t. \medspace} & x \geq 0
\end{aligned}
\end{eqnarray}
The solution vector $x$ will represent the mixture of the columns of $A$, 
in other words the mixture of valid packing strategies 
such that $A \cdot x$  is as close as possible 
(in the least squares sense) to the histogram of sequence lengths $b$.
We obtain a solution with a non-negative least squares implementation~\citeappendix{Lawson1995,2020SciPy-NMeth}
Interestingly in the case of sequence length $512$ 
only $634$ out of the $22102$ available packing strategies 
of depth up to $3$ are used ($3\%$).

\subsubsection{Padding as the residuals of the packing problem}
\label{as:padding_and_residual}
We compute the residuals of the least squares solution (after rounding the mixture to integer) as:
\begin{eqnarray}
    r = b - A \cdot \textit{round}(x)
\end{eqnarray}
The negative portion of the residuals represents sequences that we are “short”. 
That is, there is a deficit of those sequences and we are over-subscribing to them. 
The positive portion of the residuals represents sequences which have failed to be packed. 
Typically, there is a deficit of short sequences and a surplus of long sequences as demonstrated by the following plot.

\begin{figure}[ht!]
    \centering
    \includegraphics[width=0.7\linewidth]{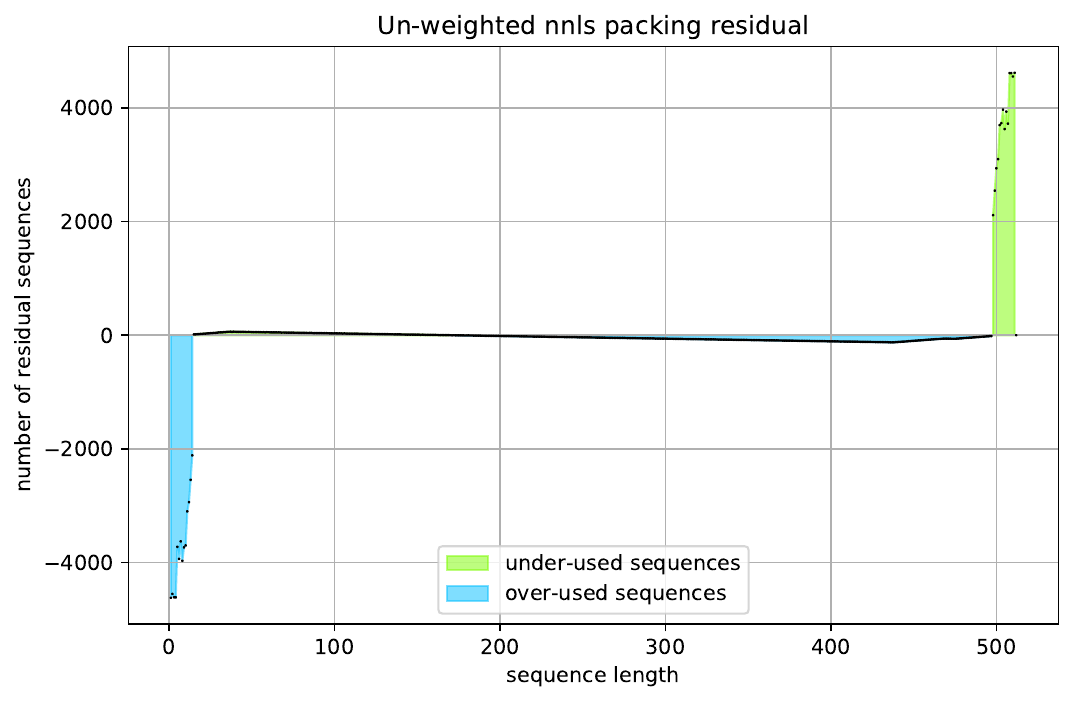}
    \caption{Visualization of the residual of the NNLS packing problem }
    \label{f:residual_1}
\end{figure}

In total, there are $n=16`279`552$ sequences in the Wikipedia pre-training dataset. 
After the non-negative least squares packing (and rounding to integer solution) 
there are $56`799$ unpacked sequences left un-packed (about $0.352\%$). 
The residual on sequence lengths 1 to 8 are 
$ [-4620, -4553, -4612, -4614, -3723, -3936, -3628, -3970]$. 
These negative residuals imply that we need to add this many sequences 
of their corresponding sequence length to realize the mixture of packing strategies. 
In total the first iteration introduces $7.94 10^6$ tokens of padding. 
In contrast large sequence lengths have a positive residual (a surplus of unused sequences). 
For sequence lengths $504$ to $512$ the values are 
$[3628, 3936, 3724, 4613, 4612, 4553, 4619, 0]$. 
Note that sequence length $512$ has a residual of $0$ 
since they do not need packing. 
Intermediate sequence lengths typically have non-zero (but much smaller) residuals.

The detailed code for the algorithm is provided in
Listing~\ref{lst:nnlscode}.

\subsubsection{Residual weighting}
\label{as:weighting}

A natural extension of the non-negative least squares problem introduced in Section~\ref{as:nnls_solution} 
is to weight the residuals on different sequence length differently.  

\begin{eqnarray}
\begin{aligned}
\min_{x\in\mathbb{R}^m} \quad &
    \|(wA) \cdot x - (wb)\|^2 \\
    \text{s.t. \medspace} & x \geq 0
\end{aligned}
\end{eqnarray}

We should not significantly penalize a deficit in short sequence lengths 
(smaller than $8$ tokens) 
as adding up to $8$ tokens of padding is not much overhead. 
Similarly, a surplus in long sequences is not worrisome because the amount of padding needed to achieve a sequence length of $512$ is small. 
Reducing the weight of the residual on the first $8$ tokens to $0.09$ 
leads to the following residual plot shown on the right in Figure~\ref{f:residual_2}. In this case the residual is almost entirely shifted to the shorter sequences and the positive residual on the longer sequences has virtual disappeared.

\begin{figure}[ht!]
    \centering
    \includegraphics[width=0.7\linewidth]{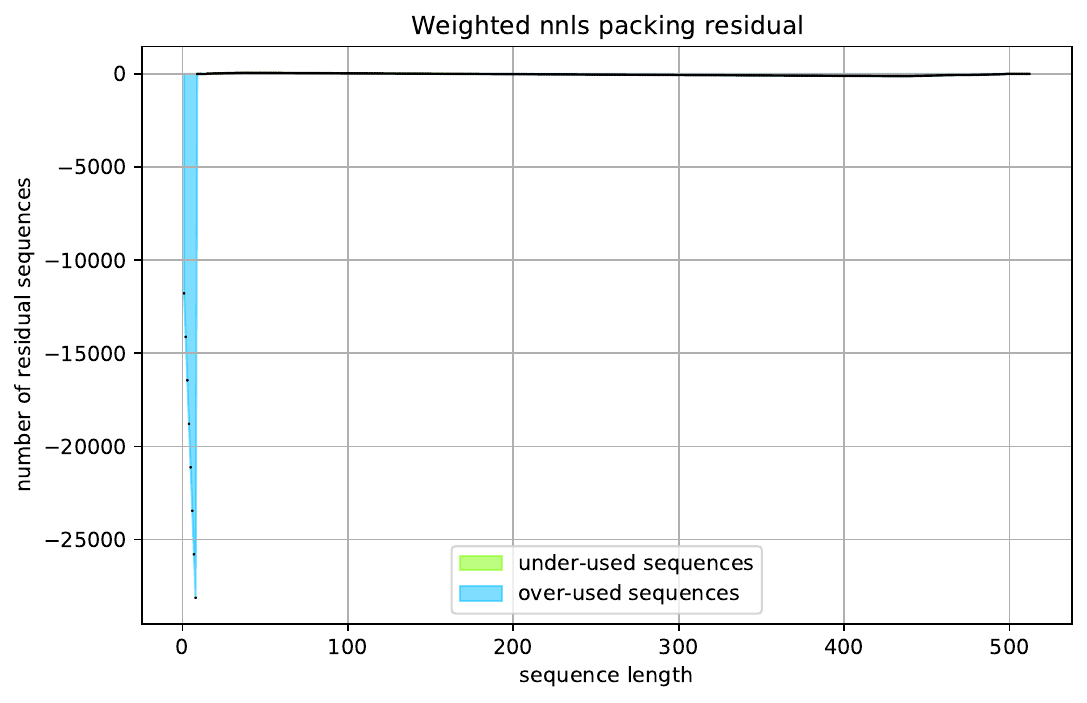}
    \caption{Visualization of the weighted residual of the NNLS packing problem}
    \label{f:residual_2}
\end{figure}

\newpage

\subsection{Discussion of residual weight choice}
\label{as:weight_params}
This section discusses the choice and effect of the weighting parameters in the NNLSP packing algorithm.
To simplify the problem of selecting reasonable defaults for the residual weights, 
we use just two parameters to completely describe the weights: an ``offset'' parameter 
and a ``weight'' parameter. 
Originally, all sequence length residuals are given the same weight of $1$.
This results in a packing with leftover long sequences, 
because there are not enough short sequences to pack them with. 
To reduce the residual on long sequences, we could either increase the residual weight on long sequences or reduce the weight on short sequences. 
We chose to reduce the weight on short sequences. Specifically, sequence lengths up to the ``offset'' length have a reduced ``weight''.
The other residual weights stay at $1$.

To start, we chose an offset of $8$ tokens,
which is the smallest power of $2$ for which there are examples in the Wikipedia dataset. 
We decrease the weight on sequences shorter than the ``offset'' from $1$ to $0.9$ to $0.09$ 
to see which order of magnitude is the most appropriate. 
On visual inspection (looking at the residual plots as in Figure~\ref{f:residual_2}),
we found that $0.9$ still left too many long sequences unpacked. 
So, we reduced the weight a further order of magnitude to $0.09$. 
This seemed sufficient to encourage nearly all long sequences to pack.   
While visual inspection helps in understanding how many long/short sequences are leftover, 
we are also interested in the impact the weights have on the overall efficiency of the packing.

Without any weighting, we get $99.746359\%$ efficiency, whereas 
the weighted approach results in $99.746274\%$ efficiency.
Hence, we conclude that the impact of the weights on the packing efficiency is very limited.
Additionally, using an ``offset'' length of $4$, resulted in similar numbers,
for the full range of weights from $0$ to $1$.
Using a weight of $0$ for an ``offset'' length of $8$ resulted in insignificantly higher efficiency of $99.7519\%$,
whereas using an ``offset'' length of $16$ reduces performance to $99.38964\%$. 
A weight of $0$ implies that the residual on those lengths can be safely ignored, i.e., 
the packing algorithm can thus add as many short sequences as it chooses without any penalty. 
It is very interesting that this does not significantly impact the packing efficiency, 
and can even have a slightly positive impact.
However, increasing the ``offset'' length further significantly decreases the performance with weight $0$.
Keeping the weight at $0.09$ and increasing the length reduces performance slightly,
for example with $99.53\%$ at length $256$ and  $99.728\%$ at length $16$.

For our Squad analysis, weighting improved the efficiency slightly from $96.94\%$ to $97.38\%$.
Fine tuning further with direction grid search, delivered a local optimum of $98.767\%$ efficiency 
with length $64$ and weight $0.002$.

Overall the influence of different residual weights on the packing efficiency (and the acceleration factor) is less than $1\%$.
This might differ from application to application, 
but it shows that we are able to use the residual weights to achieve secondary targets 
(like not having leftover long sequences) 
without significantly compromising the packing efficiency.

\newpage

\section{Complexity analysis of the proposed packing approaches}
\label{as:bigoh}

Since approximate packing algorithms have a complexity of at least $O(n \log(n))$
and we would like to be able to tackle datasets with 2K million samples,
we will discuss the complexity of our packing algorithms in this section.
The complexity depends on the maximum sequence length $s_m$,
the number of samples $n$, and the packing depth $d$.

To create the histogram, we have to iterate over the data once ($O(n)$).
Our histograms will be binned by size $1$, meaning one bin for each sequence length.
Hence, a dictionary can be generated ($O(s_m)$) 
and used for the sorting ($O(1)$).
The respective histogram vector has dimension $s_m$.

\subsection{Complexity Analysis of non-negative least-squares histogram-packing}
\label{as:bigohnnlshp}

For a packing depth of one, there is only the strategy $[s_m]$.
For a packing depth of two, we add the strategies
$[1,s_m-1], ...,[s_m-\lfloor \frac{s_m}{2}\rfloor]$
which results in an additional $\lfloor\frac{s_m}{2}\rfloor$ 
potential strategies.
Following the dynamic programming approach, the number of possible
additional strategies of depth three can be calculated with

\begin{equation}
\begin{aligned}
    \text{\# potential strategies} & =
    \sum_{j=1}^{\lfloor\frac{s_m}{3}\rfloor}
    \sum_{i=j}^{\lfloor\frac{s_m-j}{2}\rfloor} 1 = 
    \sum_{j=1}^{\lfloor\frac{s_m}{3}\rfloor} \left\lfloor\frac{s_m-j}{2}\right\rfloor - (j-1)
    \\
    & \approx
    \sum_{j=1}^{\lfloor\frac{s_m}{3}\rfloor} 
    \frac{s_m}{2} - \frac{3}{2} j
    \approx
    \frac{s_m}{2}\frac{s_m}{3}-\frac{3}{2}\frac{s_m/3(s_m/3+1)}{2}\\
    & \approx
    \left[\frac{s_m^2}{12}\right]
\end{aligned}
\end{equation}

Note that for $s_m=512$ the approximation is exact.
This means that our strategy matrix $A$ has the dimensions 
$s_m\times\left(\left[\frac{s_m^2}{12}\right]+\lfloor\frac{s_m}{2}\rfloor+1\right)$.
Overall, this leaves us with a space complexity of $s_m^3$ since $A$ is larger than $w$, $x$, and $b$.
So it contains $11`316`224$ numbers which is still much smaller than $n$.
Note that the original data matrix $B$ had $n^2$ entries, which all needed to be optimized
together with the $n$ bin assignments $y$.
We now have only $\left[\frac{s_m^2}{12}\right]+\lfloor\frac{s_m}{2}\rfloor$ free variables
in the strategy vector $x$.
Also note that $A$ can be precomputed when $s_m$ is known 
and is independent of the number of samples.
Given a problem matrix with dimension $i \times j$, 
Luo et al. \citeappendix{Luo2011} indicate that the asymptotic complexity of most solution
approaches is $O(ij^2)$, whereas they propose an $O(ij)$ solution.
Since we use the standard SciPy implementation~\citeappendix{Lawson1995},
our estimated total time complexity for NNLSHP is $O(n+s_m^5)$.

For $s_m=2048$, the estimate would be $350'540$ potential strategies
which is still far less than the number of samples.
For packing depth $4$, we calculate~\citeappendix{Wolfram}:
\begin{equation}
\begin{aligned}
    & 
    \sum_{k=1}^{\lfloor\frac{s_m}{4}\rfloor}
    \sum_{j=k}^{\lfloor\frac{s_m-k}{3}\rfloor}
    \sum_{i=j}^{\lfloor\frac{s_m-j-k}{2}\rfloor} 1\\
    & \approx
    \sum_{k=1}^{\lfloor\frac{s_m}{4}\rfloor}
    \sum_{j=k}^{\lfloor\frac{s_m-k}{3}\rfloor}
    \frac{s_m-k+2-3j}{2}\\
    & \approx
    \sum_{k=1}^{\lfloor\frac{s_m}{4}\rfloor}
    \frac{1}{12}(s+4-4k)(s+3-4k)\\
    & \approx
    \frac{1}{288}s(2s^2+9s+4)\\
    & =
    \frac{1}{288}s(s+4)(2s+1)
\end{aligned}
\end{equation}

So with $s_m=512$, there would be around $940$K  strategies.
In our implementation, this number of strategies would be too high to create
the problem matrix.
One alternatives to simplify would be to not use the exact length of sequences but to
only consider even numbers for the sequence length and round up.
That way arbitrary sequence length could also be handled and the limiting factor
would be the complexity of the attention layer in BERT 
which does not scale well with the sequence length.

\subsection{Complexity Analysis of shortest-pack-first histogram-packing}
\label{as:bigohspfhp}

The complexity calculation of SPFHP is straightforward.
We go over the whole data once for the histogram sorting.
Next, we iterate over each of the $s_m$ bins in the histogram.
Lastly, we iterate over all strategies that were encountered so far.
It can be proven that, at each iteration, the number of strategies
can be maximally increased by one. 
In each step, we potentially 
add a sequence to existing strategies but a new strategy is opened up only in the final step,
when we either create a new strategy or we split one of the existing strategies into two.
Hence, the number of strategies is bounded by $s_m$ and the overall time complexity is
bounded by $O(n+s_m^2)$.
The space complexity is $O(s_m^2)$ since we need to store up to $s_m$ strategies
with maximum $s_m$ counts for different sequence length.




\section{Performance Comparison to GREEDY Packing in T5}

T5~\cite{Raffel2019} is normally trained on the C4 dataset. 
However, to give an idea of the difference in packing efficiency and acceleration
compared to our newly introduced algorithm, 
we can analyse the performance of greedy aggregation of samples on our given Wikipedia dataset.

We take the histogram and cast it back to a list of different sequence lengths 
since this is all that matters for analysing packing behaviour. 
Next, we randomly shuffle the dataset 
and iterate with the greedy aggregation algorithm multiple times to account for randomness. 
We iterate sequence by sequence and combine them 
provided the maximum sequence length of $512$ is not yet reached. 
If it is exceeded, the packed sequence is considered finished and a new sequence is started.

The greedy packing algorithm itself takes a bit more than $10$ seconds, 
since we are operating on single sequences and not histogram counts. 
The efficiency of this approach is $78.24\%$ (standard deviation of $0.005$) 
compared to our $99.75\%$ for NNLSHP. 
The respective acceleration would be around $1.566x$ compared to our $2x$. 
With respective separator tokens, 
the performance decreases around $0.13\%$ for one separator token and $0.27\%$ 
when two separator tokens are required between two sequences. 
Following the brief documentation at
\href{https://github.com/tensorflow/tensor2tensor/blob/5623deb79cfcd28f8f8c5463b58b5bd76a81fd0d/tensor2tensor/data_generators/generator_utils.py#L1086}{tensor2tensor [link]}, two separator tokens would be expected
in the T5 processing.

In addition to the packing preprocessing, our paper proposes, rather than using separator tokens, to instead modify the masking of the attention matrix during training. The RoBERTa paper shows that avoiding contamination of sequences from different documents can consistently improve downstream F1 scores by $0.35\%$.

\section{Impact of NSP loss}
\label{as:nsp}

When running packed BERT base without the NSP loss 
but keeping everything else the same,
we observed that downstream performance on SQuAD reduced the F1 measure by $1.31\%$
and EM by $1.15\%$.

For the packing in approaches like RoBERTa or T5,
it is crucial that there is no NSP loss because that would circumvent
putting arbitrary sequences together in contrast to our approach
that can handle multiple sequences 
from different documents without cross-contamination.
Liu et al. \cite{Liu2019} argument that NSP can be omitted
because ``removing the NSP loss
matches or slightly improves downstream task performance''.
In their experiments, 
they compare the normal BERT setup with NSP (``SEGMENT-PAIR'')
to the ``DOC-SENTENCES'' approach, 
where there is no NSP and data in one sequence comes only from one document.
For the ``SEGMENT-PAIR'' approach, the paper does not address,
how much padding tokens are still present.
Assuming, it is around $40\%$, their correction in batch sizes
for each step would result in a significant increase in training steps
for the ``DOC-SENTENCES'' approach.
It is well known that BERT performance increases with longer pretraining time.
Our results indicate that NSP loss might be still relevant,
depending on the dataset generation process.
With our approach, we can get the acceleration benefits of T5 and RoBERTa
while keeping the predictive performance
by avoiding cross-contamination.

\clearpage

\section{Wikipedia with Longer Sequence Length}
\label{as:length}

The histogram raw data for Wikipedia with different maximum sequence length 
is provided in Listing~\ref{lst:hist}
and visualized in Figure~\ref{f:wiki}.
We can see that with increasing maximum sequence length,
long sequences become more and more rare
and the resulting benefits from packing drastically increase.
Keeping in mind that the BERT dataset generation process decreases the size of a maximum
of $50\%$ of the sequences, we can infer that having a different dataset generator
that truncates any short sequence, would result in significant loss of data 
($>25\%$ for length $512$). 

\begin{figure*}[htb!]
    \centering
    \includegraphics[width=0.95\linewidth]{Figures/wikipedia_histograms.pdf}
    \includegraphics[width=0.95\linewidth]{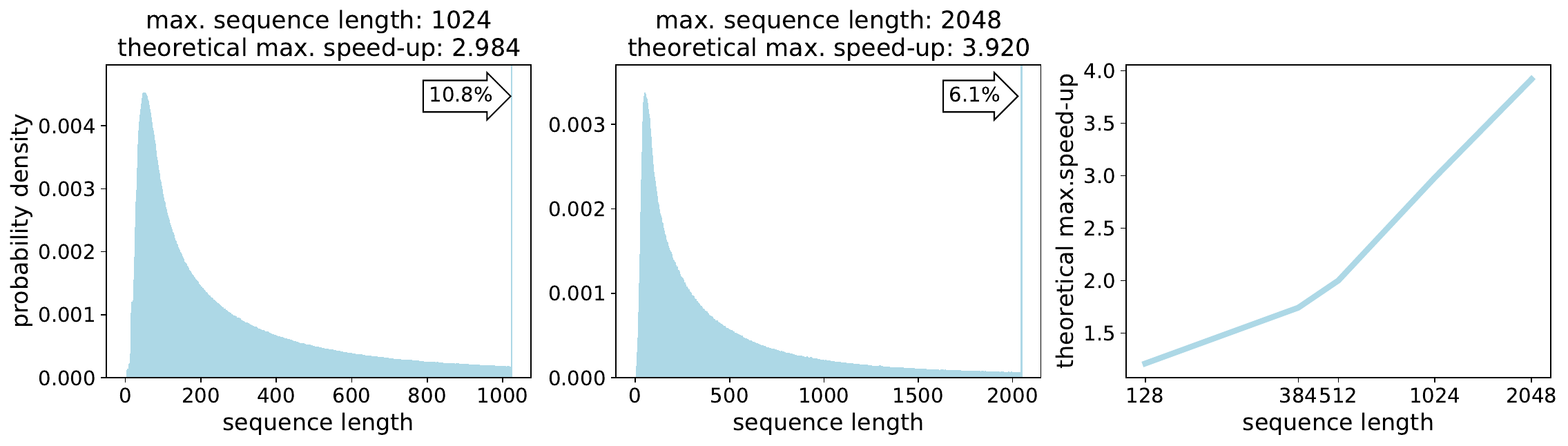}
    \caption{Sequence length distributions for different sequence lengths in Wikipedia BERT pre-training dataset
    and according theoretical speed-up. 
    }
    \label{f:wiki}
\end{figure*}

Due to the length distribution,
it is not anymore sufficient to concatenate only $3$ sequences to obtain perfect packing
for maximum sequence length $1024$ or $2048$. 
Instead, around $6$ and $12$ sequences are required.
This cannot be solved by NNLSHP anymore due to search space complexity 
but requires an online heuristics
like SPFHP or the slightly better LPFHP, introduced in Section~\ref{as:LPFHP}
that is based on Best-Fit and splitting counts in the histogram 
in contrast to the rather simple First-Fit descending.
Figure~\ref{f:wikiseq} shows the achieved speed-ups with LPFHP 
depending on the maximum number of allowed sequences.

\begin{figure}[htb!]
    \centering
    \includegraphics[width=0.40\linewidth]{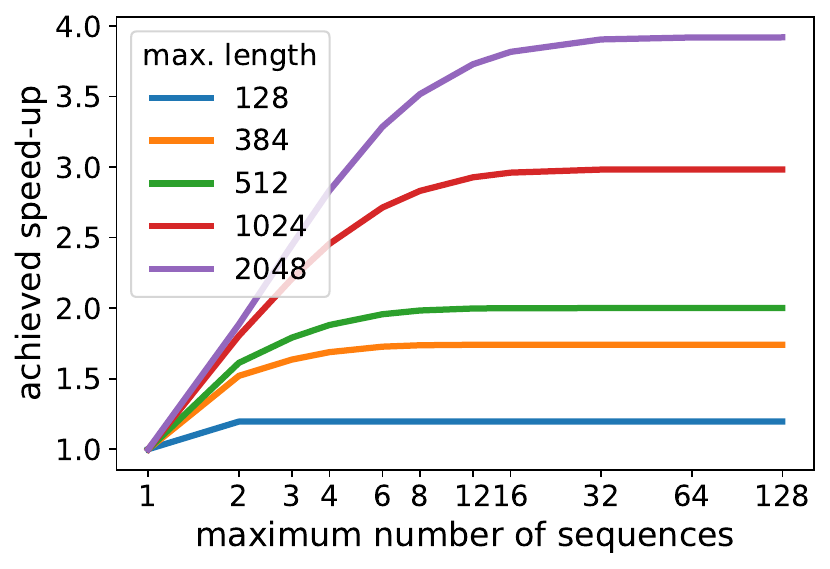}
    \caption{Speed-ups achieved by LPFHP for different maximum sequence length and
    maximum number of packed sequences.
    }
    \label{f:wikiseq}
\end{figure}

\clearpage

\section{Packing SQuAD 1.1}
\label{as:squad}

We tokenized SQuAD~\cite{squad} for BERT~\cite{Devlin2019} with maximum sequence length $384$
and visualized the histogram over the sequence length (Figure~\ref{f:squadhist}).
The distribution looks similar to the Wikipedia dataset but is slightly less skewed.
However, the maximum sequence length only had an occurrence of $1.2\%$ compared to $23.5\%$.
Hence, the theoretical un-padding speedup is $2.232$.
In Table~\ref{tab:squad}, we can see that SPFHP
does not concatenate more than $3$ samples and obtains $97.54\%$ efficiency 
in contrast to a maximally used depth of $16$ with $99.60\%$ efficiency on Wikipedia,
because of the less skewed distribution.
Note that we have less than $90'000$ samples.
Hence, NNLSHP is less efficient because the rounding in the residuals
has a much larger impact compared to more than $16$ million sequences in the Wikipedia dataset.

\begin{figure}[ht!]
    \centering
    \includegraphics[width=0.95\linewidth]{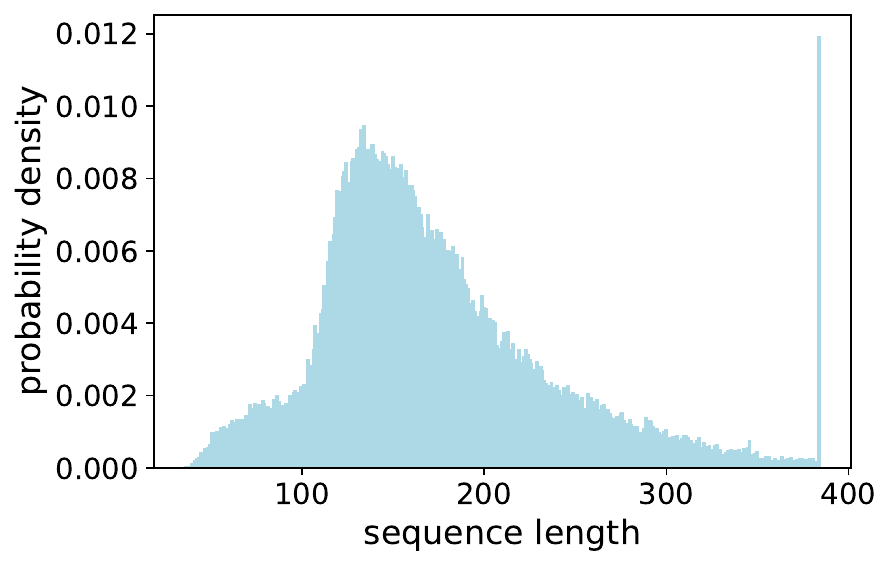}
    \caption{SQuAD 1.1 BERT pre-training dataset sequence length histogram for maximum sequence length of 384.}
    \label{f:squadhist}
\end{figure}

\begin{table}[ht!]
\caption{Performance results of proposed packing algorithms for SQuAD 1.1 BERT pre-training.}
\label{tab:squad}
\centering
\begin{tabular}{lrrrrrrr}
\hline
 packing  & packing & \# strategies & \# packs & \# tokens & \# padding & efficiency & packing  \\
 depth  & algorithm & used      &          &           & tokens     &  (\%)      & factor\\
\hline
 1      & none  & 348 &     88641 &       34038144 &           18788665 &    44.801 &  1.000 \\
 2      & SPFHP   & 348 &     45335 &       17408640 &            2159161 &    87.597 &  1.955 \\
 3      & NNLSHP  & 398 &     40808 &       15670272 &             420793 &    97.310 &  2.172 \\
3/max   & SPFHP   & 344 &     40711 &       15633024 &             383545 &    97.547 &  2.177 \\
\hline
\end{tabular}
\end{table}

\clearpage

\section{Packing GLUE}
\label{as:glue}

To explore a variety of datasets and emphasize that skewed distributions are common,
we explored all datasets in the GLUE benchmark~\cite{warstadt2018neural,wang2019glue}
that came with training data.
We loaded the datasets using the HuggingFace dataset loading API~\citeappendix{2020HuggingFace-datasets}.
For preprocessing, we followed the implementation 
in the HuggingFace transformers 
repository~\cite{wolf-etal-2020-transformers}~\footnote{\url{https://github.com/huggingface/transformers/blob/master/examples/text-classification/run_glue.py}}
and extracted the respective data processing snippets 
to obtain tokenized data with a maximum sequence length of $128$.
The histogram of the sequence length for each of the included datasets is displayed in 
Figure~\ref{f:gluehist} and the packing results are given in Table~\ref{tab:glue}.
Each dataset benefits from packing.
The lower the mean, the higher the packing factors are that can be reached but with a higher packing depth.

\begin{figure}[ht!]
    \centering
    \includegraphics[width=0.60\linewidth]{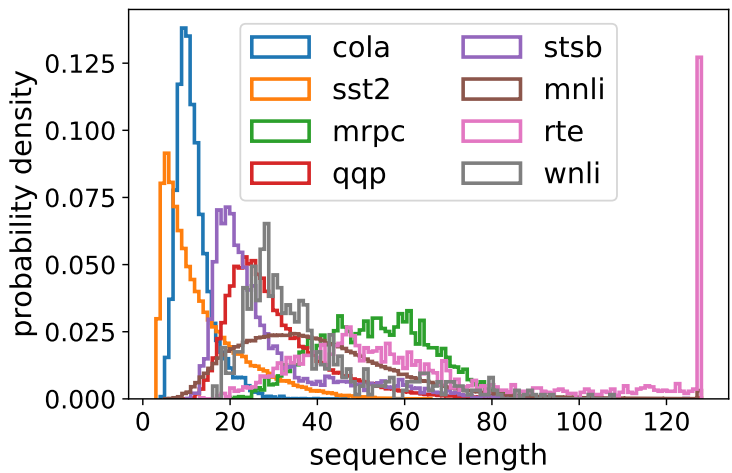}
    \caption{GLUE dataset sequence length histograms for maximum sequence length of 128.}
    \label{f:gluehist}
\end{figure}

\begin{table}[ht!]
\caption{Performance results of proposed packing algorithms for the GLUE dataset. 
Only the baseline and the SPFHP packing results without limiting the packing depth are displayed.}
\label{tab:glue}
\centering
\begin{tabular}{lrrrrrrr}
\hline
data & packing  & \# strategies & \# packs & \# tokens & \# padding & efficiency & packing  \\
name & depth  & used      &          &           & tokens     &  (\%)      & factor \\
\hline
 cola   & 1            &              34 &      8551 &        1094528 &             997669 &     8.849 &  1.000 \\
 cola   & 13/max     &              29 &       913 &         116864 &              20005 &    82.882 &  9.366 \\
 \hline
 sst2   & 1            &              64 &     67349 &        8620672 &            7723633 &    10.406 &  1.000 \\
 sst2   & 15/max     &              64 &      7691 &         984448 &              87409 &    91.121 &  8.757 \\
 \hline
 mrpc   & 1            &              77 &      3668 &         469504 &             274214 &    41.595 &  1.000 \\
 mrpc   & 4/max      &              74 &      1606 &         205568 &              10278 &    95.000 &  2.284 \\
 \hline
 qqp    & 1            &             123 &    363846 &       46572288 &           35448844 &    23.884 &  1.000 \\
 qqp    & 5/max      &             123 &     97204 &       12442112 &            1318668 &    89.402 &  3.743 \\
 \hline
 stsb   & 1            &              85 &      5749 &         735872 &             575993 &    21.726 &  1.000 \\
 stsb   & 6/max      &              83 &      1367 &         174976 &              15097 &    91.372 &  4.206 \\
 \hline
 mnli   & 1            &             124 &    392702 &       50265856 &           34636487 &    31.093 &  1.000 \\
 mnli   & 8/max      &             124 &    123980 &       15869440 &             240071 &    98.487 &  3.167 \\
 \hline
 rte    & 1            &             112 &      2490 &         318720 &             152980 &    52.002 &  1.000 \\
 rte    & 4/max      &             108 &      1330 &         170240 &               4500 &    97.357 &  1.872 \\
 \hline
 wnli   & 1            &              72 &       635 &          81280 &              57741 &    28.960 &  1.000 \\
 wnli   & 6/max      &              63 &       192 &          24576 &               1037 &    95.780 &  3.307 \\
\hline
\end{tabular}
\end{table}

\clearpage

\section{Packing Audio Data (LibriSpeech)}
\label{as:librispeech}

In this section, we show that packing can benefit other domains than NLP like ASR.
We use the LibiSpeech dataset~\cite{panayotov2015librispeech} and preprocess it as described at
a reference implementation.\footnote{\url{https://github.com/mlcommons/training/tree/master/rnn_speech_recognition/pytorch}}
The resulting histograms for the subsampled audio sample lengths and respective text labels
are provided in Figure~\ref{f:librispeech}

\begin{figure}[ht!]
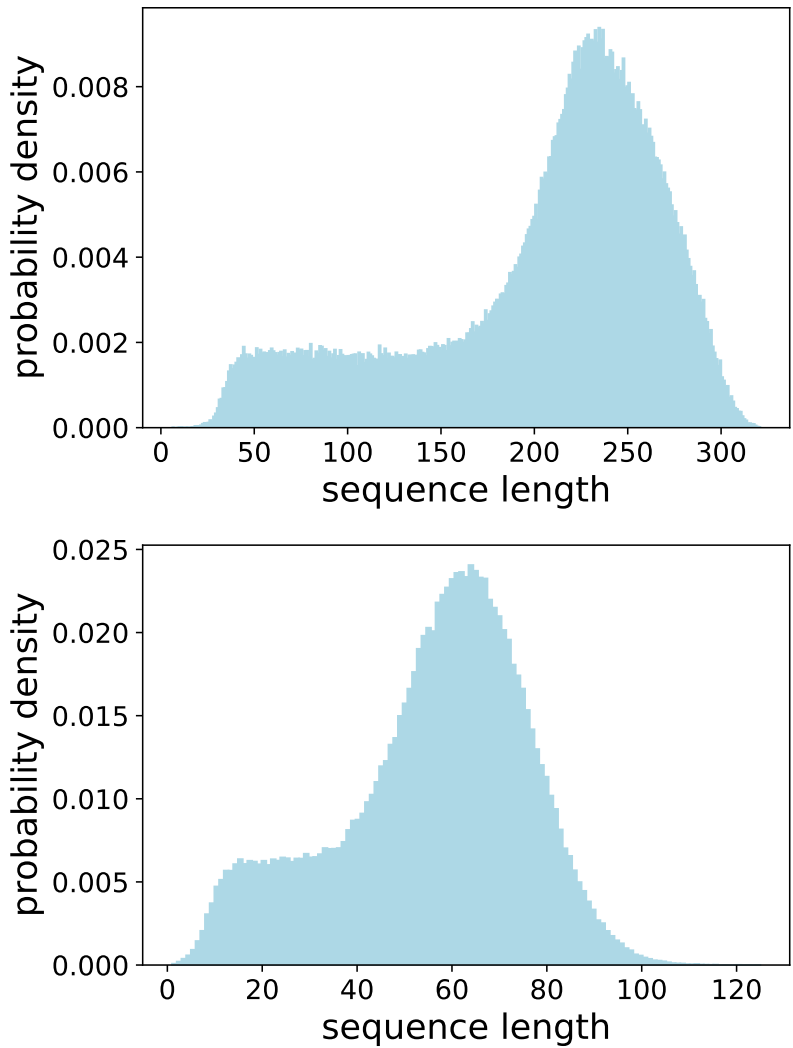

    \centering
    \includegraphics[width=0.65\linewidth]{Figures/Subsampled_Audio_histogram.pdf}\\
    \includegraphics[width=0.65\linewidth]{Figures/Text_histogram.pdf}
    \caption{LibriSpeech sequence length histograms of preprocessed audio data [top] as well as target text data [bottom].}
    \label{f:librispeech}
\end{figure}

It can be seen that the audio sequence length is dominated by long sequences 
with $38\%$ of required padding to meet the max sequence length of $330$.
Thus the theoretical optimal speed-up of $1.6x$ cannot be reached.
However, $80\%$ efficiency are possible with any of the proposed packing algorithms to achieve $1.3x$ speed-up.
This can be already achieved by combining up to $2$ sequences.
To achieve almost perfect packing efficiency, 
a sequence length around $457$ and concatenating up to $8$ sequences is required.
Due to the quadratic increased computational load that usually comes with longer sequence length,
increasing the sequence length is not practical.

If processing and packing the text data independently of the audio,
$99.99\%$ efficiency could be achieved with a speed-up of $2.24x$.

\clearpage

\section{Packing Paper Abstracts (PubMed)}
\label{as:pubmed}

This section analyses the length of abstracts to give an intuition about how different
documents can be in length. Figure~\ref{f:pubmed} depicts the length of abstracts in characters
extracted from 
PubMed.\footnote{\url{https://huggingface.co/datasets/pubmed}}
If these abstracts were directly used as sequences, 
a character length of $1000$ could result in $1.9x$ speed-up from packing.
The potential speed-ups for length $2000$, $3000$, $4000$ would be $2x$, $3x$, and $4x$, respectively.
Note that, document clean-up procedures would usually eliminate documents that are too short or too long
for data sanitizing purposes.

\begin{figure}[ht!]
    \centering
    \includegraphics[width=0.90\linewidth]{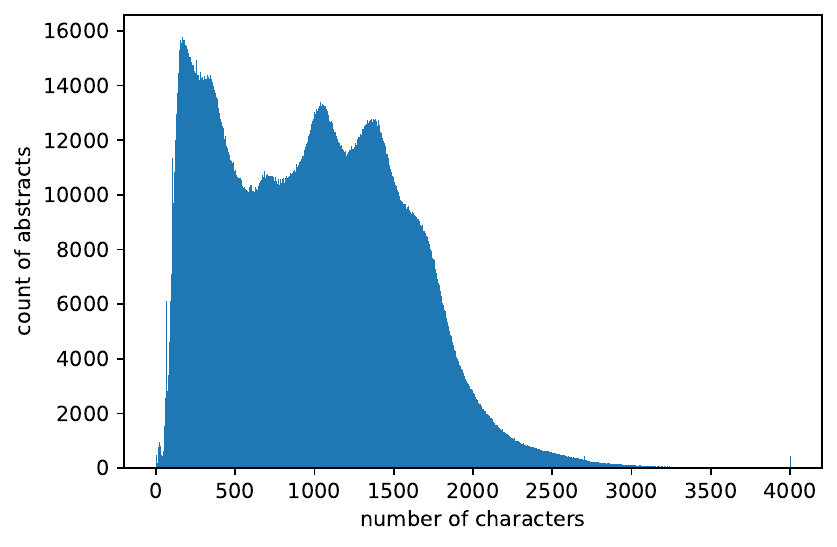}
    \caption{Abstract length distribution in PubMed.}
    \label{f:pubmed}
\end{figure}

Note that for the processing in BlueBERT~\citeappendix{peng2019transfer}, 
paper titles and abstracts get separated into sequences, tokenized,
and then combined with the BERT sequence combination approach for a maximum sequence length of 128 tokens.
Thus, it results in a different distribution.

\clearpage


\section{MLPerf\texttrademark~phase 2 learning curves}
\label{as:lcmlperf}

This section provides further learning curves related to Section~\ref{s:explc}.

\begin{figure}[ht!]
    \centering
    \includegraphics[width=0.49\linewidth]{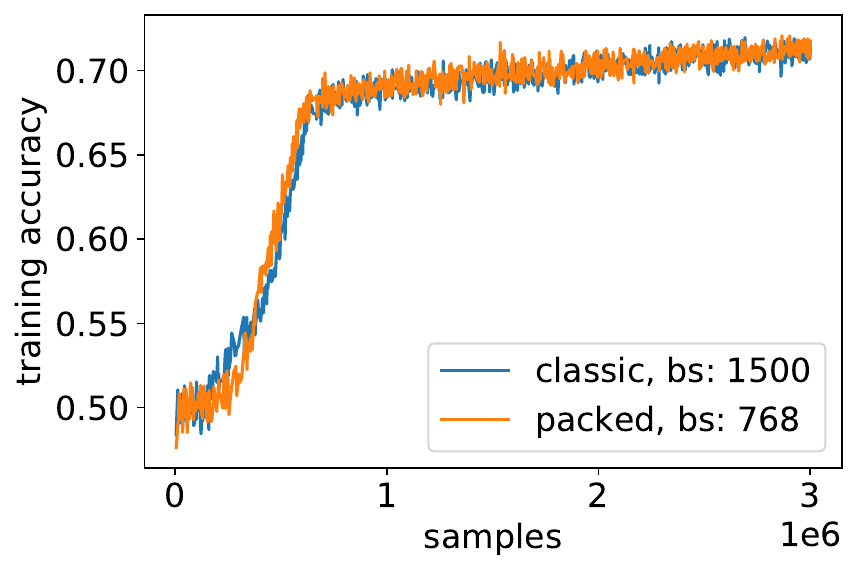}
    \includegraphics[width=0.49\linewidth]{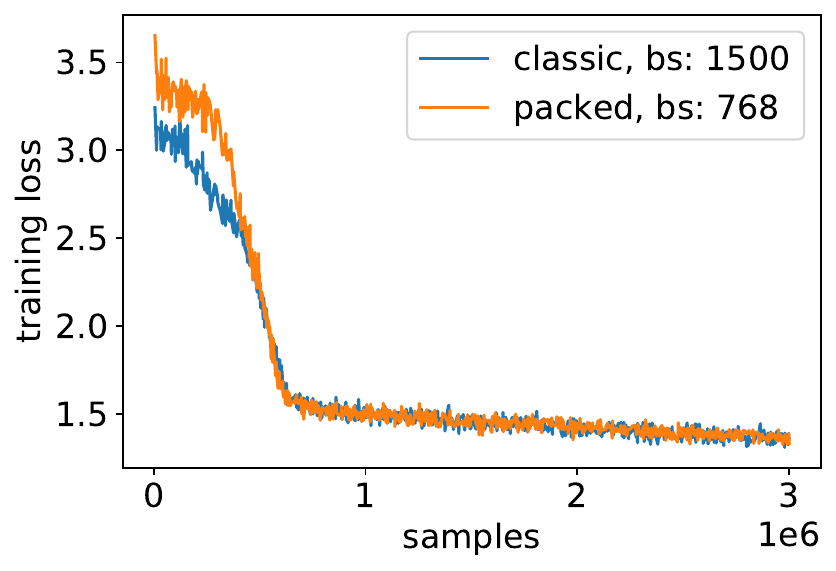}
    \caption{Comparison of learning curves for packed and unpacked processing
    with \textbf{reduced batch size} for the packed approach.}
    \label{f:batch_correct}
\end{figure}

\begin{figure}[ht!]
    \centering
    \includegraphics[width=0.49\linewidth]{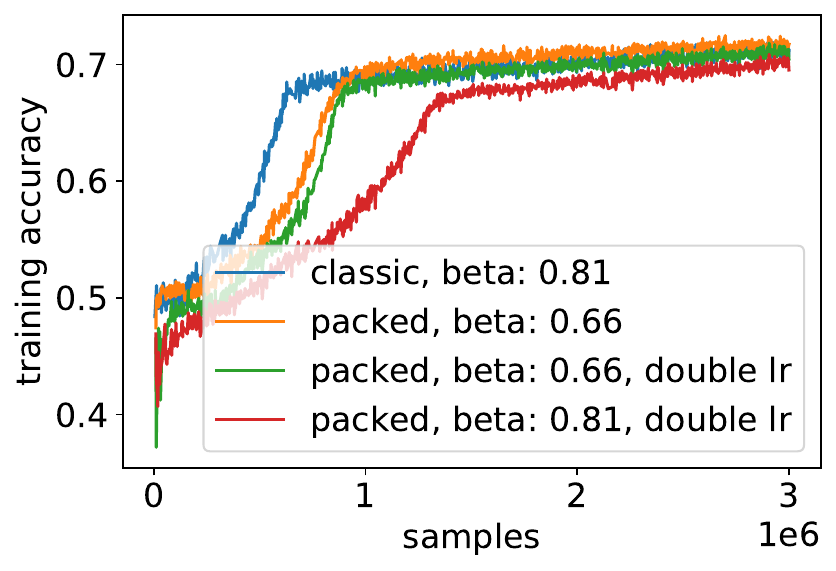}
    \includegraphics[width=0.49\linewidth]{Figures/heuristics_learning_curves_samples_loss.pdf}
    \caption{Comparison of learning curves for packed and unpacked processing
    with \textbf{heuristics} applied.}
    \label{f:heuristics}
\end{figure}

\begin{figure}[ht!]
    \centering
    \includegraphics[width=0.49\linewidth]{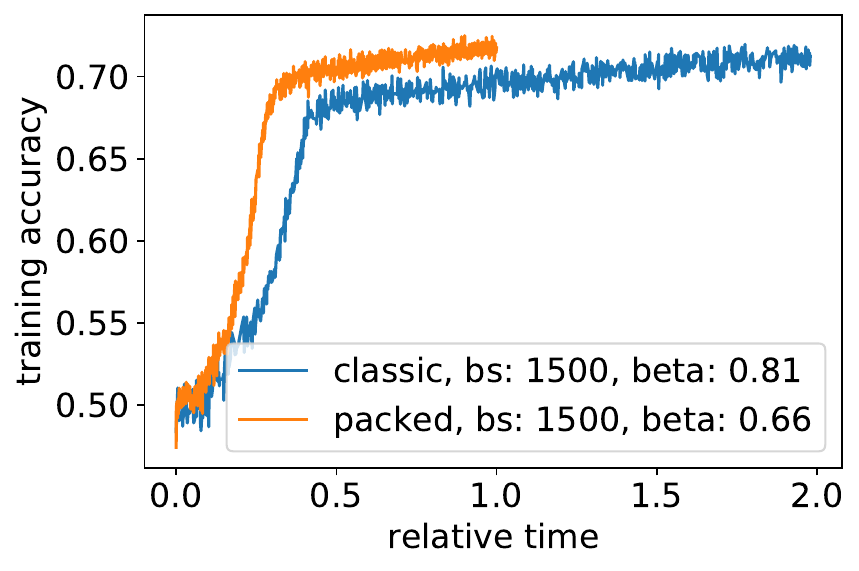}
    \includegraphics[width=0.49\linewidth]{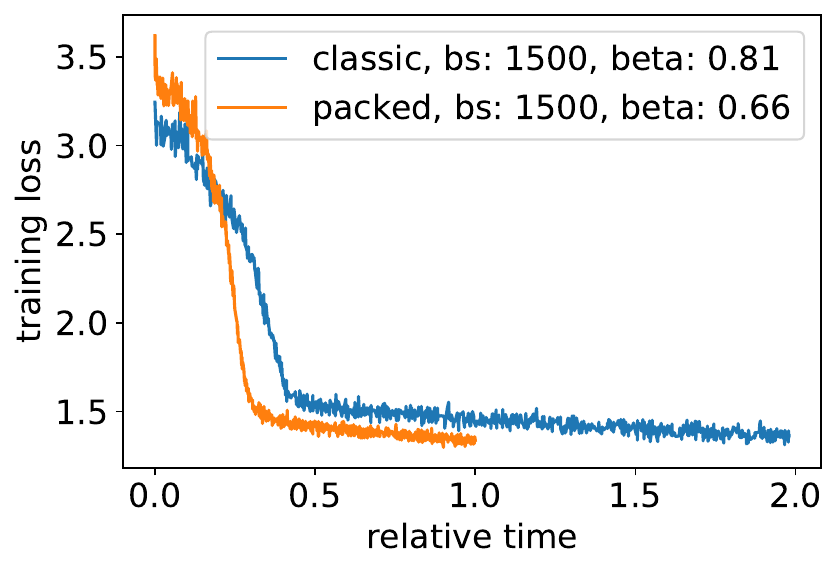}
    \caption{Comparison of learning curves for packed and unpacked processing
    in the \textbf{optimized setup}.}
    \label{f:best}
\end{figure}

\clearpage

\section{Full pretraining of BERT base and large learning curves}
\label{as:lcfull}

This section provides further learning curves related to Section~\ref{a:downstream}.

\begin{figure}[ht!]
    \centering
    \includegraphics[width=0.49\linewidth, height=0.21\paperheight]{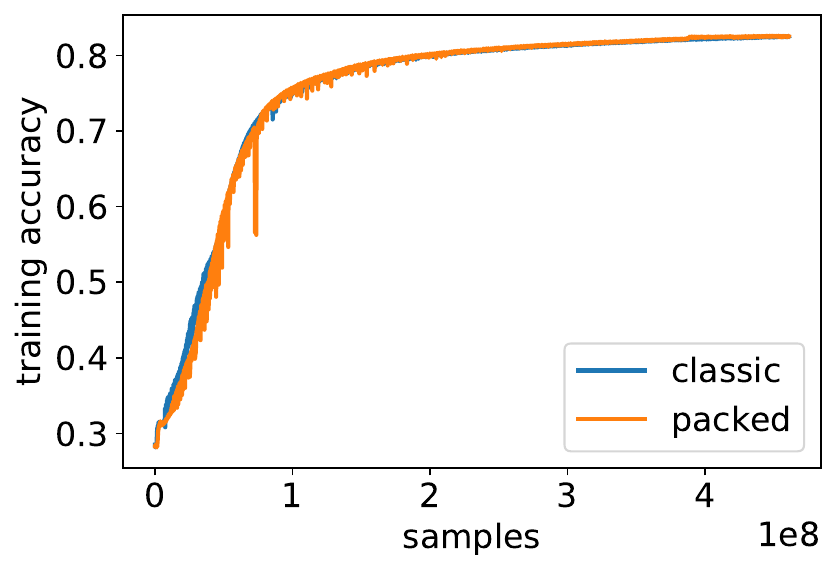}
    \includegraphics[width=0.49\linewidth, 
    height=0.21\paperheight]{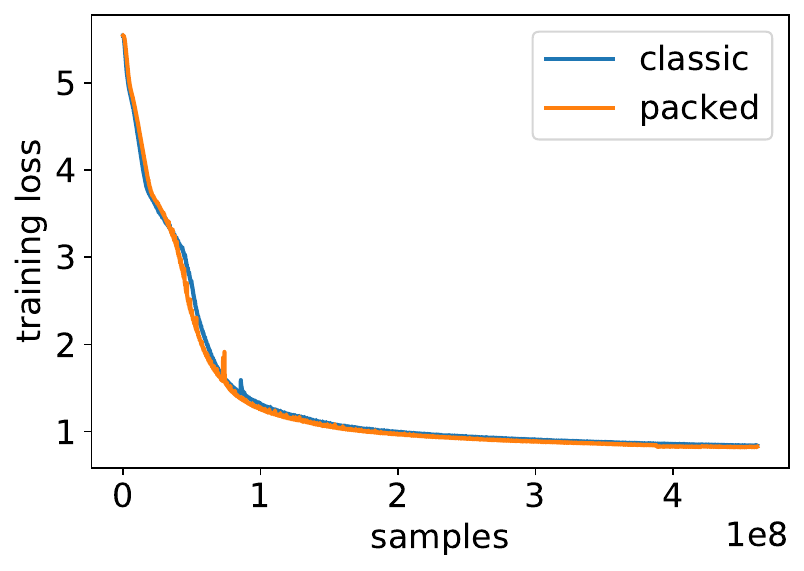}
    \caption{Comparison of learning curves for \textbf{BERT base phase 1} (sequence length 128) with packed and unpacked processing.}
    \label{f:bert_base_phase_1}
\end{figure}

\begin{figure}[ht!]
    \centering
    \includegraphics[width=0.49\linewidth, height=0.21\paperheight]{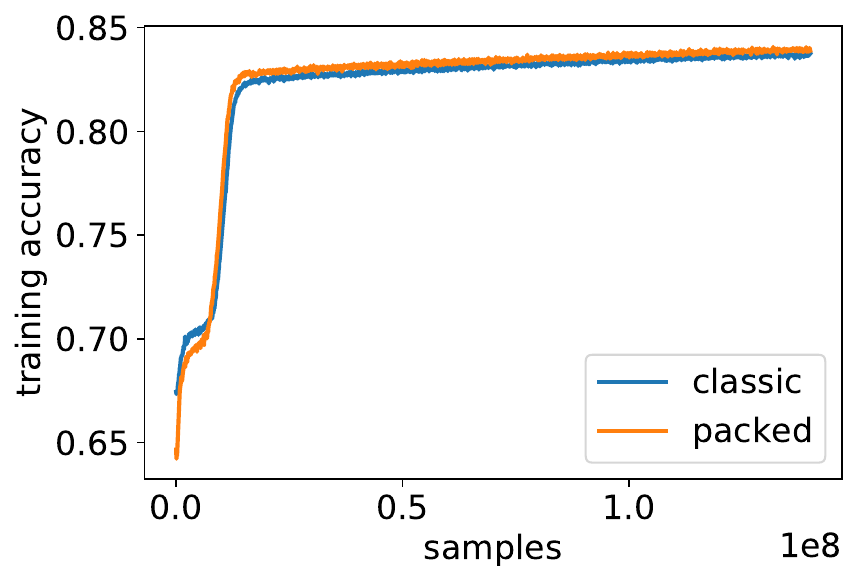}
    \includegraphics[width=0.49\linewidth, 
    height=0.21\paperheight]{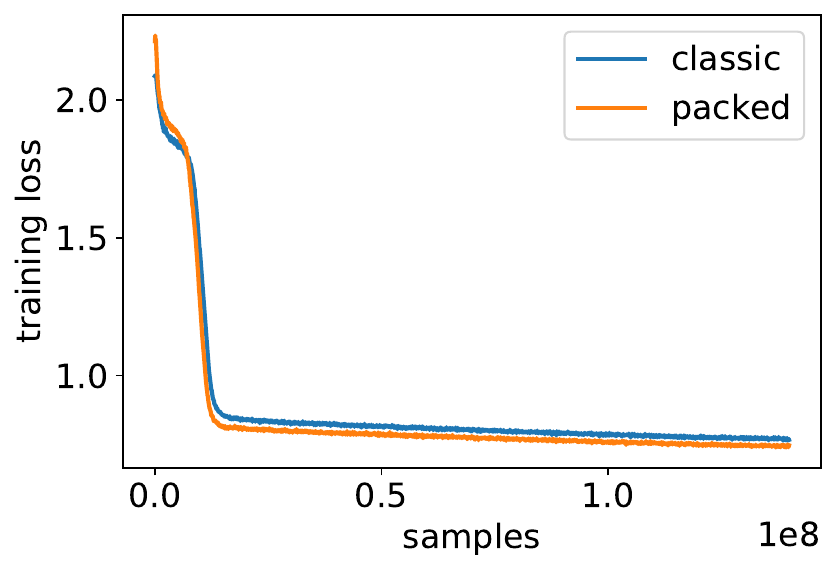}
    \caption{Comparison of learning curves for \textbf{BERT base phase 2} (sequence length 384) with packed and unpacked processing.}
    \label{f:bert_base_phase_2}
\end{figure}

\begin{figure}[ht!]
    \centering
    \includegraphics[width=0.49\linewidth, height=0.21\paperheight]{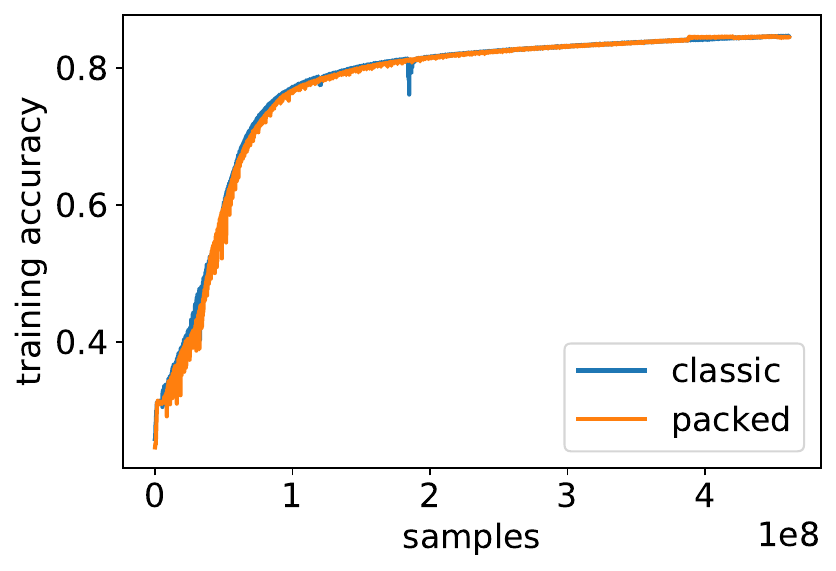}
    \includegraphics[width=0.49\linewidth, 
    height=0.21\paperheight]{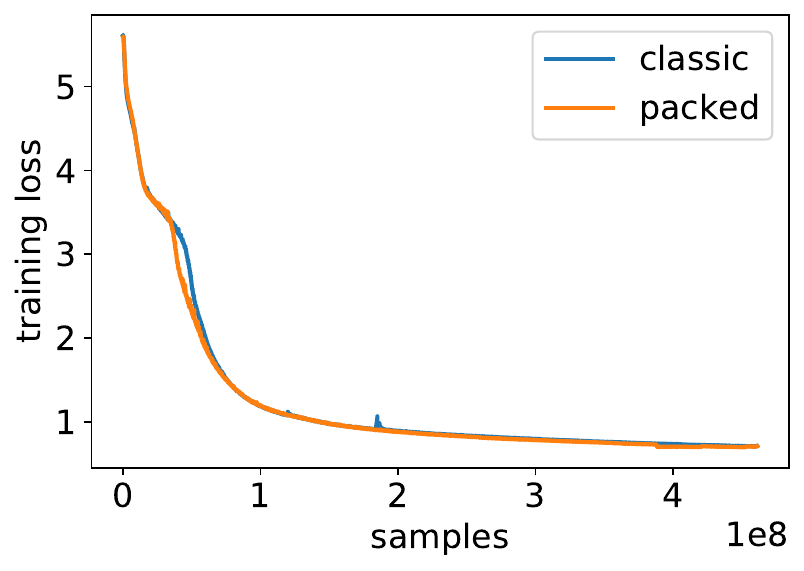}
    \caption{Comparison of learning curves for \textbf{BERT large phase 1} (sequence length 128) with packed and unpacked processing.}
    \label{f:bert_large_phase_1}
\end{figure}

\begin{figure}[t]
    \includegraphics[width=0.49\linewidth, height=0.21\paperheight]{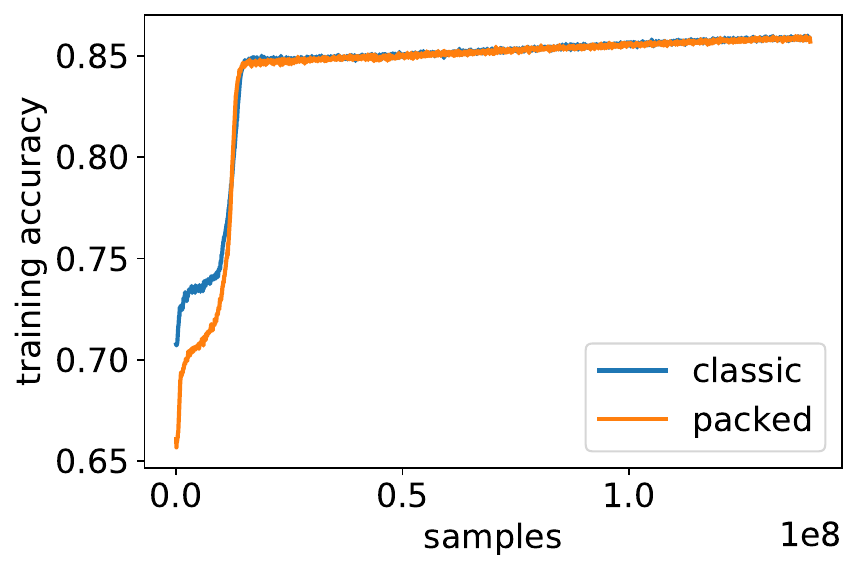}
    \includegraphics[width=0.49\linewidth, 
    height=0.21\paperheight]{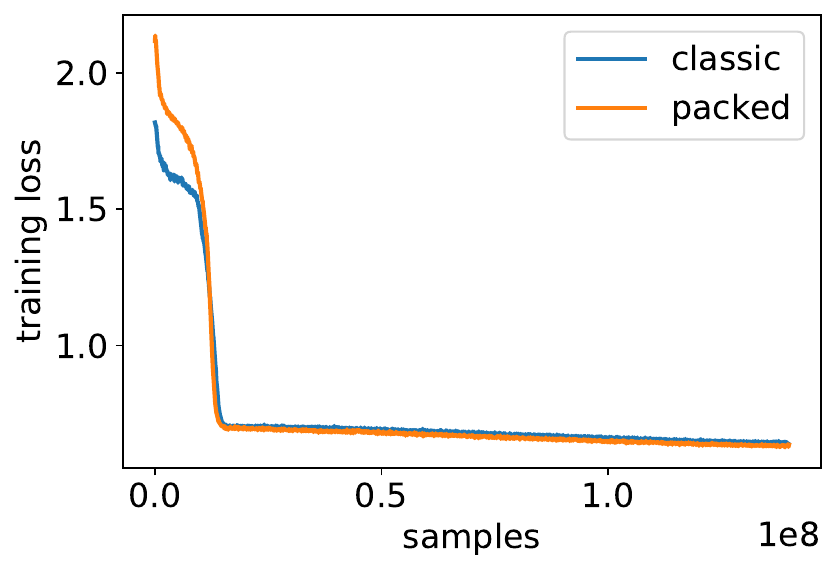}
    \caption{Comparison of learning curves for \textbf{BERT large phase 2} (sequence length 384) with packed and unpacked processing.}
    \label{f:bert_large_phase_2}
\end{figure}

\clearpage

\section{Note on changing the sequence length for optimal packing}
\label{as:qm9length}
An interesting aspect of packing is that the maximum sequence length
for packing could be larger than the maximum sequence length
in the underlying dataset that gets packed.

For the QM9 dataset, 
this means that by setting the maximum sequence length to $36$
instead of $27$
an optimal $1.6x$ speed-up can be easily achieved.

Note that the choice of maximum sequence length depends on the underlying machine learning algorithm.
Due to the squared computational and memory complexity of self-attention in BERT and other transformers,
the maximum sequence length is usually kept as small as possible for these models.
So an increase for packing alone is not practical.
For algorithms with linear complexity as for example Graph Neural Networks, implemented in 
PyG, larger maximum sequence length can be chosen to ensure, optimal packing is always possible.

\section{Fine-tuned longest-pack-first histogram-packing}
\label{as:LPFHP}

In the main paper, we focused on SPFHP due its simplicity.
In this section, we analyse the effect of applying the
``Best-Fit'' algorithm~\cite{johnson1973near}.
Here, the longest pack that still fits the sequence is chosen
instead of the shortest one.
In contrast to SPFHP, we additionally consider splitting 
the histogram count, if it can fit multiple times.
A simple example is sequence length $256$,
where we divide the respective histogram count by $2$ 
to create the optimal pack with strategy
$[256, 256]$ instead of the strategy $[256]$.
This latter strategy would be complemented by other sequences
but would probably not result in an optimal packing.
The implementation of this approach is much more complex than the SPFHP implementation.
The code is provided in Listing~\ref{lst:BFHP}
and the results in Table~\ref{tab:BFhp}.

\begin{table}[ht!]
\centering
\begin{tabular}{lrrrrrrr}
\hline
pack.  & \# strat. & \# packs & \# tokens & \# padding & efficiency & pack.  \\
depth  & used      &          &           & tokens     &  (\%)      & factor \\
\hline
 1             &             508 &  16279552 & 8335130624 &       4170334451 &           49.967 &         1.000  \\
 2             &             634 &  10099081 & 5170729472 &       1005933299 &           80.546 &         1.612  \\
 3             &             648 &   9090154 & 4654158848 &        489362675 &           89.485 &         1.791  \\
 4             &             671 &   8657119 & 4432444928 &        267648755 &           93.962 &         1.880  \\
 8             &             670 &   8207569 & 4202275328 &         37479155 &           99.108 &         1.983  \\
 16            &             670 &   8140006 & 4167683072 &          2886899 &           99.931 &         2.000  \\
 29/max        &             670 &   8138483 & 4166903296 &          2107123 &           99.949 &         2.000  \\
\hline
\end{tabular}
\caption{Performance results of longest-pack-first histogram-packing for Wikipedia BERT pre-training
with maximum sequence length $512$.}
\label{tab:BFhp}
\end{table}

We can see that longest-pack-first histogram-packing (LPFHP) uses a much higher 
packing depth when no limit is set ($29$ instead of $16$).
Splitting the histogram counts results in slightly higher numbers of used strategies
compared to SPFHP 
where the number of used strategies is limited by the maximum sequence length.
The best efficiency of LPFHP is $99.949\%$ with packing factor of $2$
which is slightly higher than the $99.75\%$ ($1.996$ packing factor) for NNLSHP
and $99.6\%$ for SPFHP ($1.993$ packing factor).
All algorithms are very close to the upper limit.

Note that for NNLSHP, we only fill up the unpacked samples with padding.
Applying best-fit on the remains, similar results can be expected.
Although the benefits of the improved algorithm are negligible,
we share the concept and code below in case packing is applied to other data
with a different distribution that would benefit more from it,
or for applications where only perfectly packed sequences without
padding are of interest.

\clearpage

\section{Extended NNLS with padding token weighting}

In Section~\ref{as:padding_and_residual}, we defined the residual as

\begin{eqnarray}
    r = b - A \cdot \textit{round}(x)
\end{eqnarray}

and discovered that a positive residual corresponds to sequences 
that we did not pack at all and should be avoided.
Negative residuals correspond to padding and should be minimized.
Due to this discrepancy, we decided to set small weights for very short sequences (that don't occur in the data).
However, it was not possible to directly optimize the amount of padding.
A negative residual component for length $i$, $r_i$, results in $|r_i|\cdot i$ padding tokens, 
however a positive residual actually results into $(512-r_i)\cdot i$ padding tokens.
This cannot be addressed by our weighting approach in

\begin{eqnarray}
\begin{aligned}
\min_{x\in\mathbb{R}^m} \quad &
    \|(wA) \cdot x - (wb)\|^2 \\
    \text{s.t. \medspace} & x \geq 0
\end{aligned}
\end{eqnarray}

Working within the NNLS approach, we can strictly enforce a non-positive residual $r$ (before rounding to integer).
To that end, we define a new auxiliary variable $\overline{r}\approx -(b-Ax)$ 
which is the negative of the residual, $r$.
This will allow us to reformulate the objective $r\leq 0$ to the non-negative constraint: $\overline{r}\geq 0$.

\begin{eqnarray}
\begin{aligned}
\min_{x\in\mathbb{R}^m} \quad &
    \|(wA) \cdot x - (wb)\|^2 +  \|\overline{w}\cdot A \cdot x - \overline{w}\cdot b-\overline{w} \cdot \overline{r}\|^2\\
    \text{s.t. \medspace} & x \geq 0\\
                          & \overline{r} \geq 0
\end{aligned}
\end{eqnarray}

This will enforce $\overline{r}=Ax-b\geq0$ due to the large weight, $\overline{w}:=10^6$, and no upper limits on $\overline{r}$.
Now, we can set $w_i:=i$ to optimize for the padding tokens.
Due to the use of the squared error, we would however optimize the squared sum of padding tokens instead
of the preferred sum of padding tokens.
To accomplish the latter, we would have to replace the L2-norm problem by an L1-norm problem which would be
too complex to solve.
Note that due to rounding, the unwanted positive residuals $r$ 
($\overline{r} < 0$) might still occur. 
This could be avoided by rounding up $x$ instead of normal rounding of $x$.
To put the new formulation into a solver, we replace
\begin {equation} b \text{ by }
\begin{pmatrix}
b \\
b 
\end{pmatrix},\,
x \text{ by }
\begin{pmatrix}
x \\
\overline{r} 
\end{pmatrix},\,
w \text{ by }
\begin{pmatrix}
w \\
\overline{w} 
\end{pmatrix} \text{, and }
A \text{ by }
\begin{pmatrix}
A & 0_m\\
A & -D_m
\end{pmatrix},
\end{equation}

where $0_m$ is an $m\times m$ matrix with $m$ being the maximum sequence length, $512$, and
$D_m$ is a unit matrix of the same dimensions as $0_m$.
Since, we are already close to optimum especially on the Wikipedia dataset,
the results are only a little bit better.
The processing time however increases from $30$ to $415$ seconds
without considering the increased time for constructing the processing matrix.
Since the slightly improved algorithm might be nevertheless relevant for other applications,
we share it in Listing~\ref{lst:ennlscode}.


\clearpage 
\section{Implementation Challenges and Tricks}

Whereas the model changes are described in Section~\ref{s:pbert}, 
getting them implemented in the most efficient way
can require a bit more effort.
This section points out a few tricks that we used in our code.





\subsection{Packing Algorithms}

Whereas the packing algorithm implementations might look trivial,
they can become quite intricate.
For example, when splitting and distributing bins
like for example combining 2 sequences of length 256 to a sequence of length 512,
the number of categories can drastically increase and thus the search space.
Hence, it is valuable to test each adjustment while changing the packing algorithms.
If a solution is not provided right away, the algorithm switched probably
to a way less efficient complexity category.

\subsection{Positional Encoding}

This approach was implemented as described in Section~\ref{s:posenc}
by providing the index of the item with the data.
Note that for any other part in BERT, the exact position does not matter.
This allows to actually rearrange the data to our advantage.
We can start with the up to 72 mask tokens and have an additional mask, that tell us, 
which tokens are the mask tokens, a list that provides their true labels, and with the positional encoding, 
we can determine their position in the sequence.

The NSP tokens get moved from the beginnings of their sequences to the end.

\subsection{Attention}
For the attention mask, we realised creating them on host
can have a major cost in data transfer due to its size.
Instead, one can create the mask on the accelerator.
Therefore, we implemented a custom operation using C++ and PopArt:
\url{https://github.com/graphcore/examples/blob/master/nlp/bert/popart/custom_ops/attention_mask.cpp}.

Note that in most cases, the attention mask gets not multiplied but added for efficiency.
Hence, the ``softmask\_mask'' is used 
instead of the multiplication mask from Figure~\ref{fig:attention_lossunpack}
in our implementation.

\subsection{Avoiding loss unpacking}

Note that the MLM loss is applied on a token level and does not need any loss unpacking.
However, for NSP, theoretically, the NSP tokens would be distributes within a sequence.
During dataset creation however, we arranged the tokens and moved all NSP tokens to the end.
Due to our packing strategy, we also know that those tokens are limited
to a maximum number of 3.
This, we can apply the NSP head to the 3 potential positions and just provide a mask
to filter out the relevant NSP tokens.
This way, we need much less memory and compute for the unpacking for the NSP loss.

\subsection{Testing}

The ultimate approach to test the correctness of the implementation is to check,
if packed and unpacked sequence provide the same values and gradients.
Due to large numeric variations,
we implemented this test in FP32 for our PyTorch Huggingface implementation
This way, we could prove that with the correct adjustments,
unpacked sequences processed with vanilla BERT result in the exact same losses
and weight updates as the packed sequences processed with the modified packed BERT version.

\clearpage
\subsection{Loss Balancing}

This section addresses a challenge, called loss imbalance, 
that is usually faced with small batch sizes
with different appearance when running packed compared to vanilla BERT.
It can also translate to other scenarios where losses get averaged
with large amounts and variance of underlying padding in the data 
or variance in the underlying ``sequences/segments/components'' in a batch.
This is highly relevant since model sizes increase and already now, the microbatch size 
when running BERT large on the IPU is 3 and on the GPU for large scale training,
a batch size of 3 is used on a single GPU to limit the total batch size to 12960 aggregated over 4320 
GPUs.\footnote{\url{https://github.com/mlcommons/training_results_v1.1/blob/main/NVIDIA/benchmarks/bert/implementations/pytorch/config_DGXA100_540x8x3x1_new.sh##L2}}

The main question is, 
how much influence/weight in a gradient update does a single MLM token and a single NSP token get
and how does this change with batch size, packing, or other factors that woule be expected to be invariants?
Let us look into two extreme cases: batch size 1 and a batch being the full dataset.
Note that in the BERT model, 
we first take the mean over all MLM tokens and over all NSP tokens and then add the losses up.

For a batch size of 1, there are two extreme cases in the vanilla BERT setting.
In case 1, we have 1 MLM token and 1 NSP token.
So each token gets a weight of 1 in the final sum.
In case 2, we have 76 MLM tokens and 1 NSP token.
So each MLM token gets a weight of 1/76 in the overall loss/gradien/weight update
and the NSP token, again gets a weight of 1.
This means, the MLM tokens of short sequences get a weight of 1 
and it reduces linearly down to 1/76 for maximum sequence.
Thus, short sequences get more influence in the weight update
and the ratio of weights compared to NSP changes, too,
even though it is unclear how the ratio influences the final result.

Let us assume perfect packing efficiency for packed BERT.
Hence, we have 76 MLM tokens
and a weight of 1/76 for the MLM tokens in every case independent of the batch size.
However, with a maximum packing depth of 3,
the number of NSP tokens can range between 1 and 3 and thus the
weights can be 1, 1/2, 1/3.
This means that NSP loss for a sequence of length 512
gets 3 times more weight than the NSP loss for a single sequence
compared to packing 3 sequences for example of length 170 together.
Again, the ratio between NSP and MLM changes, too.

Now lets look at the other extreme case of a batch being the full dataset of size $L$ 
(which behaves similar to the case of a large batch size between 12k-1000k which is common).
Again, for vanilla BERT, the NSP weight is $1/L$ in any case.
Assuming $50\%$ padding, which can be common as previously shown,
and again a maximum of 76 MLM tokens per sequence,
we get a total of $76\cdot 0.5 \cdot L$ MLM tokens
with the respective reciprocal value for the weight.
There is no variation.
$76\cdot 0.5$ is the average number of MLM tokens per sample.

Assuming a packing factor of $2$, the respective maximum batch size can only be $L/2$.
This fits to our scheme of reducing the batch size to avoid further adjustments of hyperparameters.
For packed BERT, the number of MLM tokens is doubled compared to the average case in vanilla BERT 
and thus the weight is $1/(76\cdot 1.0 \cdot (L/2))$,
assuming a packing efficiency of $100\%$.
The number of NSP tokens is $2\cdot (L/2)$ and the respective weight is $1/L$.
Again there is no variation
and the weights between packed and vanilla BERT are identical.
This seems more like an ideal case that is less dependent on how samples are put together.
Also, it ensures equivalence between packed and vanilla setup.

Getting weights calculated correctly in a distributed setup (data parallel processing as well as pipelining)
where each replica has a small batch size down to 1 is challenging.
Each replica would need separate gradients for NSP and MLM loss,
then aggregate a weighted sum for those separate gradients,
and only afterwards add up the gradients before the optimiser update.
This is infeasible because of challenges in framework implementation,
large increase of memory requirements,
roughly doubling of the computational workload for the backpropagation,
and more than doubling the communication overhead for weights.

We propose a simplified approach that generalizes from the weights,
we observed for large batches, to the weights in tiny batches.
Instead of averaging using the real number of tokens, 
we propose using the expected number of tokens instead.
Technically that means, the mean aggregation gets replaced by a sum aggregation multiplied by a constant weight.
Let $b$ be our batch size, $e$ the token efficiency, $p$ the packing factor, 
and $m$ the maximum number of MLM tokens in a sample.
This means, for vanilla BERT with sequence length $512$, we have something like $e=0.5$, $p=1$, $m=76$
and for packed BERT, we have $e=1$, $p=2$, $m=76$.
Let $l_M^{i,k}, i\in I(k), k \in \{1,..,b\}$ be the active MLM losses and $l_N^{j,k}, j\in J(k), k \in \{1,..,b\}$
be the active NSP losses in a sequence.
Then we balance the MLM loss calculation like:
\begin{equation}
    \text{mean}(l_M)=\frac{\sum_{k \in \{1,..,b\}}\sum_{i\in I(k)} l_M^{i,k}}{\sum_{k \in \{1,..,b\}}\sum_{i\in I(k)} 1}
    \rightarrow \text{balanced}(l_M)=
    \frac{\sum_{k \in \{1,..,b\}}\sum_{i\in I(k)} l_M^{i,k}}{b\cdot m \cdot e}
\end{equation}

and the NSP loss calculation like:

\begin{equation}
    \text{mean}(l_N)=\frac{\sum_{k \in \{1,..,b\}}\sum_{j\in J(k)} l_N^{j,k}}{\sum_{k \in \{1,..,b\}}\sum_{j\in J(k)} 1}
    \rightarrow \text{balanced}(l_N)=
    \frac{\sum_{k \in \{1,..,b\}}\sum_{j\in J(k)} l_N^{j,k}}{b\cdot p}.
\end{equation}

Note that when logging the loss, it should be averaged over multiple batches to get
a representative result that is comparable to values previously obtained.
This approach is straightforward to implement in any framework,
even though some fine-tuning might be required when working with low precision.

In our experiments, loss balancing only reduced the noise in the NSP loss.
Other than that, it had no influence on the loss curves.

\clearpage

\section{Packing source code}
\label{as:code}


\begin{lrbox}{\lstA}
\lstinputlisting[language=Python, caption=Non-negative least squares histogram-packing, label=lst:nnlscode, firstline=3]{Code/nnlshp.py}
\end{lrbox}
\href{https://github.com/graphcore/tutorials/tree/master/blogs_code/packedBERT/nnlshp.py}{\usebox{\lstA}}

\clearpage

\begin{lrbox}{\lstA}
\lstinputlisting[language=Python, escapeinside=``, label=lst:wbfcode, caption=Shortest-pack-first histogram-packing, firstline=3]{Code/spfhp.py}
\end{lrbox}
\href{https://github.com/graphcore/tutorials/tree/master/blogs_code/packedBERT/spfhp.py}{\usebox{\lstA}}

\clearpage

\lstinputlisting[language=Python, escapeinside=``, label=lst:wbfevalcode, caption=Evaluation function of shortest-pack-first histogram-packing, firstline=2]{Code/spfhp_eval.py}

\begin{lstlisting}[language=Python, caption=Loss calculation, label=lst:loss]
# The number of sequences in each batch may vary
sequences_in_batch = tf.reduce_sum(tf.reduce_max(masked_lm_weight, -1))
sequences_in_batch = tf.cast(sequences_in_batch, tf.float32)
# Create the 0/1 mask that will be used to un-packed sequences
masked_lm_weight = tf.reshape(masked_lm_weight, [B, 1, -1])
sequence_selection = tf.reshape(tf.range(1, max_sequences_per_pack + 1), [1, -1, 1])
sequence_selection = tf.cast(masked_lm_weight == sequence_selection, tf.float32)
# Apply the mask to un-pack the loss per sequence
nll_per_token = tf.reshape(nll_per_token, [B, 1, -1])
nll_per_sequence = sequence_selection * nll_per_token
# Normalize the per-sequence loss by the number of mlm-tokens in the sequence (as is standard)
attempted = tf.reduce_sum(sequence_selection, -1, keepdims=True)
attempted = attempted + tf.cast(attempted == 0, tf.float32)  # prevent NaNs when dividing by attempted
nll_per_sequence = nll_per_sequence/attempted
# Average per-batch loss (so contributions from different batches are comparable)
lm_loss = tf.reduce_sum(nll_per_sequence)/sequences_in_batch
\end{lstlisting}

\clearpage

\lstinputlisting[language=Python, escapeinside=``, caption=Wikipedia and SQuAD 1.1 histograms, firstline=2, label=lst:hist]{Code/histograms.py}

\clearpage

\lstinputlisting[language=Python, escapeinside=``, caption=Histogram creation for GLUE training datasets, label=lst:glue, firstline=2]{Code/glue.py}

\clearpage

\begin{lrbox}{\lstA}
\lstinputlisting[language=Python, escapeinside=``, label=lst:BFHP, caption=Longest-pack-first histogram-packing, firstline=3, lastline=82]{Code/lpfhp.py}
\end{lrbox}
\href{https://github.com/graphcore/tutorials/tree/master/blogs_code/packedBERT/lpfhp.py}{\usebox{\lstA}}
\clearpage
\begin{lrbox}{\lstA}
\lstinputlisting[language=Python, escapeinside=``, label=lst:BFHP2, firstline=83]{Code/lpfhp.py}
\end{lrbox}
\href{https://github.com/graphcore/tutorials/tree/master/blogs_code/packedBERT/lpfhp.py}{\usebox{\lstA}}
\clearpage
\begin{lrbox}{\lstA}
\lstinputlisting[language=Python, caption=Extended non-negative least squares histogram-packing, label=lst:ennlscode, firstline=3]{Code/ennlshp.py}
\end{lrbox}
\href{https://github.com/graphcore/tutorials/tree/master/blogs_code/packedBERT/ennlshp.py}{\usebox{\lstA}}

\newpage
\bibliographystyleappendix{acm}
\bibliographyappendix{references}

\end{document}